\documentclass{article}

\usepackage{arxiv}

\usepackage[utf8]{inputenc} 
\usepackage[T1]{fontenc}    
\usepackage{hyperref}       
\usepackage{url}            
\usepackage{booktabs}       
\usepackage{amsfonts}       
\usepackage{nicefrac}       
\usepackage{microtype}      
\usepackage{lipsum}		
\usepackage{graphicx}
\usepackage{natbib}
\usepackage{doi}
\usepackage{setspace}
\usepackage{float}

\usepackage{amsthm,amsmath} 
\usepackage{mathrsfs} 
\usepackage{amsmath}
\usepackage{amssymb}
\usepackage{latexsym}
\usepackage{bm}
\usepackage{color}
\usepackage{url} 
\usepackage[title]{appendix}%
\usepackage{xcolor}%
\usepackage{textcomp}%
\newcommand{\choosefont}[1]{\fontfamily{#1}\selectfont}

\usepackage{mathtools} 
\usepackage{algorithm}
\usepackage{algorithmicx}%
\usepackage{algpseudocode}%
\usepackage{chngcntr} 
\usepackage[title]{appendix}
\newtheorem{thm}{Theorem}
\newtheorem*{thm*}{Theorem}

\theoremstyle{definition}

\newtheorem{lem}[thm]{Lemma} 

\def\argmin{\mathop{\rm argmin}}

\theoremstyle{thmstyletwo}%
\newtheorem{remark}{Remark}%

\theoremstyle{thmstylethree}%
\title{Geographically Regularized AUC-Maximizing Personalized Federated Learning}

\author{ {\hspace{1mm}Mayu Hiraishi} \\
	Department of Medicine\\
	Wakayama Medical University\\
	Wakayama, Japan \\
	\texttt{m-hira@wakayama-med.ac.jp} \\
	\And
	{\hspace{1mm}Kensuke Tanioka} \\
	Department of Biomedical Sciences and Informatics\\
	Doshisha University\\
	Kyoto, Japan\\
    \texttt{ktanioka@mail.doshisha.ac.jp}\\
    \And
    {\hspace{1mm}Toshio Shimokawa} \\
	Department of Medicine\\
	Wakayama Medical University\\
	Wakayama, Japan \\
    \texttt{shimokaw@wakayama-med.ac.jp}\\
}

\renewcommand{\shorttitle}{GrAUC-PFL}

\hypersetup{
}

\begin{document}
\maketitle

\begin{abstract}
Accurate diagnostic and risk-prediction models are important for supporting clinical decision-making during infectious disease outbreaks. However, privacy and governance requirements may restrict patient-level data sharing across healthcare institutions, and data distributions often vary. Moreover, AUC is widely used to evaluate discriminative performance, motivating its direct optimization in model development. We propose geographically regularized AUC-maximizing personalized federated learning (GrAUC-PFL), which directly optimizes a smooth pairwise AUC surrogate to learn personalized models while keeping patient-level data local and accounting for institutional heterogeneity. Graph-based regularization encourages geographically neighboring institutions to have similar coefficient vectors while retaining a personalized models. Simulations and a real-data application suggest improved discriminative performance, particularly when geographically neighboring institutions have similar data-generating characteristics.
\end{abstract}

\keywords{AUC maximization \and geographic proximity \and graph regularization \and medical diagnosis \and personalized federated learning}

\section{Introduction}
\label{sec:introduction}

During large-scale infectious disease epidemics, such as the COVID-19 pandemic, accurate diagnostic and risk prediction models are essential for identifying patients at high risk of severe disease, supporting timely clinical decision-making, and allocating limited healthcare resources efficiently. Developing reliable diagnostic models often benefits data collected across multiple healthcare institutions. However, in recent years, privacy concerns and regulatory requirements have often restricted patient-level data sharing, resulting in difficulties in aggregating data across multiple institutions \citep{info17020148}. For example, the General Data Protection Regulation in the European Union \citep{GDPR2016} imposes strict requirements on patient-level data sharing across institutions, making centralized data aggregation challenging in practice. 

In such situations, Federated learning (FL) \citep{FedAvg} enables collaborative model training without directly sharing patient-level data, which have been applied to various medical applications \citep[e.g.][]{FL_predictCOVID, FL_Melanoma}. However, clinical data often exhibit heterogeneity across institutions or regions owing to differences in patient populations, clinical practice, and evaluation criteria, which may not be adequately captured by a single global model used in conventional FL. Personalized federated learning (PFL) \citep{pfl_falla,  pfl_jiang} addresses this limitation by training personalized models, whilst utilizing information from other institutions.

In addition, healthcare institutions located in close geographical proximity are observed to share similar patient demographics and diagnostic patterns. 
In fact, hospitals serve geographically defined catchment areas, which may affect COVID-19 hospitalization forecasts \citep{Meakin2024}, while demographic structure and population movement can lead to similar epidemiological dynamics in neiboring regions \citep{Rader2020}. These observations suggest that geographic proximity can provide relevant information for one another in infectious disease settings. Therefore, beyond relationships inferred from the model parameters or observed data, geographic proximity can provide an additional information on similarities among institutions. 
This concept has been exploited in spatial statistical methods \citep[e.g.][]{GWR,SVP}. 
Therefore, in the PFL, incorporating geographical proximity is expected to improve the estimation accuracy of local models by utilising information from other participating organisations. 

Furthermore, class imbalances are likely to arise in outcomes. Consequently, in such situations, direct AUC maximization \citep{Yan_AUCmax_2003, AUCsurvey} can provide improved discriminative performance by explicitly optimizing scores based on the ranking of positive and negative cases \citep{pmlr-v80-natole18a}. Therefore, we employ AUC direct maximization. 

Motivated by these challenges, we propose geographically regularized AUC-maximizing personalized federated learning (GrAUC-PFL), which directly maximizing a smooth pairwise AUC surrogate within a PFL framework. GrAUC-PFL uses graph-based regularization that encourages geographically neighboring institutions to have similar model parameters. This allows each institution to retain personalized diagnostic models, while selectively borrowing information from its neighbors. GrAUC-PFL integrates direct AUC maximization with externally defined geographical structure within a unified PFL framework. Unlike similarity-based PFL approaches such as \cite{FedAMP, PFLG}, which infer client relationships from local model updates or parameter similarities, GrAUC-PFL uses geographical proximity as a prior structural information sharing independently to be guided by external knowledge rather than potentially noisy local model estimates. This may be particularly beneficial when local data are limited.

Because the proposed optimization problem combines a non-decomposable AUC objective function with graph-based regularization, we develop an efficient optimization algorithm based on the Alternating Direction Method of Multipliers (ADMM) \citep{boydadmm}, extending the optimization framework of \cite{perFL-RSR}. Thus, the proposed framework provides a computationally efficient approach for estimating personalized diagnostic models from geographically distributed medical data.

The remainder of this article is organised as follows. Section \ref{sec:RW} explain the related works of GrAUC-PFL. Section \ref{sec:proposed_method} introduces GrAUC-PFL, the objective function, and the algorithm. A numerical simulation is reported in Section \ref{sec:simulation} and a real data application is presented in Section \ref{sec:realdata}. We discuss the results of numerical simulation and real-data application in Section \ref{sec:discussion} and conclude the article with Section \ref{sec:conclusion}.

\section{Related works}
\label{sec:RW}

\subsection{Personalized federated learning (PFL)}
\label{subsec:PFL}

In this section, we briefly review conventional federated learning (FL) and then introduce PFL. 
FL is a distributed learning framework in which multiple institutions collaboratively train a single global model without sharing raw data. 
In standard FL, each institution sends model updates (e.g., parameters or gradients) to the server, enabling the global model to be updated without the need to provide data to external institutions. In the case of client-server structure, the server distributes the global model to each institution, and each institution updates the model using local data. The locally updated model parameters are transmitted to the central server, where they are aggregated (e.g. averaged), and then shared with each institution once again as the global model. A shared global model is estimated over multiple communication rounds. The image on the left in Figure \ref{fl_pfl} describes an example of the structure of FL. 

However, it may not be possible for the single global model to account adequately for the heterogeneity of institutions. For example, if the true models differ across institutions, the resulting global model may fail to achieve high AUC at individual institutions. Furthermore, patient demographics, sample sizes and prevalence rates often vary between institutions. When information is integrated, institutions with larger sample sizes may dominate the optimization, resulting in poorer performance at smaller institutions. To overcome this issue, the PFL framework has been proposed. 
In PFL, updating the local model using local data is the same procedure as in standard FL. When the local model or parameter are aggregated at the central server, each institution retains its own personalized model while leveraging information from other institutions, which is described at the right side of Figure \ref{fl_pfl}. In PFL, when estimating personalized models, information from other institutions without sharing raw data stored at each institution. Various approaches have been proposed to leverage information from other institutions. 

\cite{pfl_falla} extends FedAvg \citep{FedAvg} by incorporating gradient correction for each institution, thus taking personalization into account. Another approach adds the regularization term penalizing the deviation between the personalized and global models \citep{Li2020DittoFA}. In terms of group structure, clustered federated learning \cite{Sattler2019ClusteredFL} uses the gradient direction for clustering in the framework of multi-task learning, which shares information within the same clusters. In FedAMP \citep{FedAMP}, adaptively exchanges information among clients according to the similarity of their personalized models.  \cite{PFLG} shares information among neighboring clients based on graph-structured relationships, while updating the global model and local models. These PFL methods rely on a shared global model or infer client relationships from model updates or parameter similarities. In contrast, instead of estimating relationships between clients based on model updates, we incorporate geographical proximity as prior information through graph-based regularization, thereby promoting the sharing of information between geographically close institutions whilst maintaining personalized models.

\begin{figure}[htbp]
\begin{center}
\includegraphics[scale=0.6]{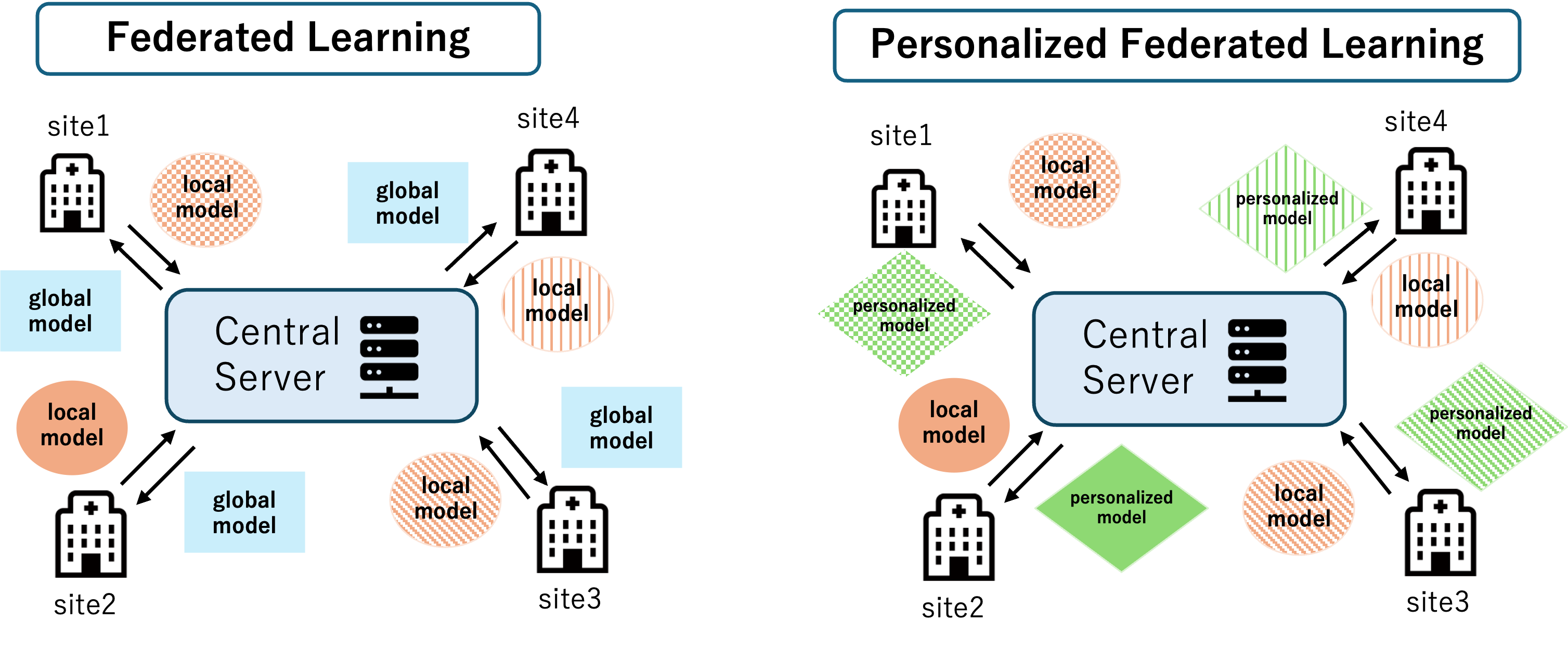}
\caption{Images of federated learning and personalized federated learning.}
\label{fl_pfl}
\end{center}
\end{figure}

\subsection{AUC maximization}
\label{subsec:AUC_max}

\noindent
{\bf Definition of AUC}

The ROC curve depicts the relationship between the true positive rate (TPR) and false positive rate (FPR) as the threshold for the scoring function varies. 
For the scoring function $s(x)$, the TPR and FPR when using threshold $\nu \in \mathbb{R}$ are defined as $TPR(\nu) = P(s(X)\geq \nu|Y=1)$ and $FPR(\nu) = Pr(s(X)\geq \nu | Y=0)$, respectively.
 The ROC curve is then given as the set of 
$(FPR(\nu), TPR(\nu))$ obtained as $\nu$ varies.

Let the target variable $y \in \{0,1 \}$ be a binary variable representing the observed true label, where a positive instance is denoted by $y=1$ and a negative instance by $y=0$. Let the covariates corresponding to the positive and negative examples be denoted by $\bm{x}^+ \in \mathbb{R}^p$ and $\bm{x}^- \in \mathbb{R}^p$, respectively, and assume these are independent random variables following probability distributions $P^+$ and $P^-$, respectively. In this case, the AUC is defined as the probability that a positive subject achieves a higher score than a negative subject. Let $f(\bm{x})$ be the scoring function and $I(\cdot)$ be the indicator function. Then AUC as the probability is defined as follows:
\begin{align}
{\rm AUC}(f) = P\big((f(\bm{x}^+) > f(\bm{x}^-)\big) = \mathbb{E}_{\bm{x}^+ \sim P^+, \bm{x}^- \sim P^-}\big[I\big(f(\bm{x}^+) > f(\bm{x}^-)\big)\big].
\label{auc_max}
\end{align}
In the case of actual data, where the covariates for positive cases are denoted by $\bm{x}_i^+$ ($i=1, 2, \dots, n$) and those for negative cases by $\bm{x}_j^-$ ($j=1, 2, \dots, m$), the AUC is expressed as the following:
\begin{align}
\mathrm{AUC}
&= \frac{1}{nm}
\sum_{i=1}^{n}
\sum_{j=1}^{m}
I(f(\bm{x}^+_i)>f(\bm{x}^-_j))
\label{auc_func}
\end{align}
where $f(\bm{x}): \mathbb{R}^p \rightarrow \mathbb{R}$ is the function to predict the parameters, and we assume a linear scoring function $f(\bm{x})= \bm{\beta}^\top \bm{x}$, where $\bm{\beta}$ signifies the parameter to calculate the score. 
$I(\cdot)$ is the indicator function returning $1$ if $f(\bm{x}^+_i)>f(\bm{x}^-_j)$. 
Eq. (\ref{auc_func}) corresponds to the standardized Mann-Whitney U-statistic \citep{Ustat}, which represents the probability that a randomly selected positive sample will be assigned a higher score than a randomly selected negative sample.

Under the scoring model, AUC can be expressed as a pairwise comparison of score differences:
\begin{align}
\mathrm{AUC}
&= \frac{1}{n m}
\sum_{i =1}^{n}
\sum_{j =1}^{m}
I\!\left(
\bm{\beta}^\top \bm{x}^+_{i} > \bm{\beta}^\top\bm{x}^-_{j}
\right) \nonumber\\
&= \frac{1}{n m}
\sum_{i =1}^{n}
\sum_{j =1}^{m}
I \left(
\bm{\beta}^\top (\bm{x}^+_{i} - \bm{x}^-_{j}) > 0
\right) \nonumber\\
&=  \frac{1}{n m}
\sum_{i =1}^{n}
\sum_{j =1}^{m}
I \left(
f(\bm{x}_i^+) - f(\bm{x}_j^-) >0
\right) 
\label{auc_func2}
\end{align}
Eq. (\ref{auc_func2}) shows that AUC maximization can be interpreted as a ranking problem based on pairwise differences.

\noindent
{\bf Surrogate function of AUC maximization
}

As shown in Eq. (\ref{auc_func2}), AUC depends on the discontinuous indicator function, which is not differentiable and not suitable for gradient-based optimization. 
Therefore, it is common to approximate it using a surrogate function. Various surrogate loss functions have been proposed  \citep{Yuan2021, TIAN20111691}, and in this study, we employ the logistic loss \citep{sulam_logis}. This provides a smooth approximation while preserving the ranking structure and enable efficient optimization.

For a pair $(i,j)$, the logistic-based surrogate can be described as:
\begin{align*}
\ell\!\left(\bm{\beta}_; \bm{x}_{i}^{+}, \bm{x}_{j}^{-}\right)
= 
\log \big(1+\exp[-(\bm{\beta}^\top \bm{x}_{i}^{+} - \bm{\beta}^\top \bm{x}_{j}^{-} )-q] \big)
\end{align*}
where $q \ (q \geq 0)$ is the margin constant value. Here, $\bm{\beta}$ signifies the parameter to calculate the score. The surrogate loss corresponding to AUC is defined as follows:
\begin{align}
 \frac{1}{n m}
\sum_{i =1}^{n}
\sum_{j =1}^{m}
\log \big(1+\exp[-(\bm{\beta}^\top \bm{x}_{i}^{+} - \bm{\beta}^\top \bm{x}_{j}^{-} )-q] \big).
\label{auc_surrogate}
\end{align}
This surrogate function preserves the ranking structure of each pair while providing a differentiable approximation of the AUC objective function. 
This enables efficient optimization using gradient-based methods.

\section{Geographically regularized AUC-maximizing personalized federated learning (GrAUC-PFL)}
\label{sec:proposed_method}

\subsection{Framework of GrAUC-PFL}
\label{subsec:framework}

\begin{figure}[htbp]
\begin{center}
\includegraphics[width=0.8\linewidth]{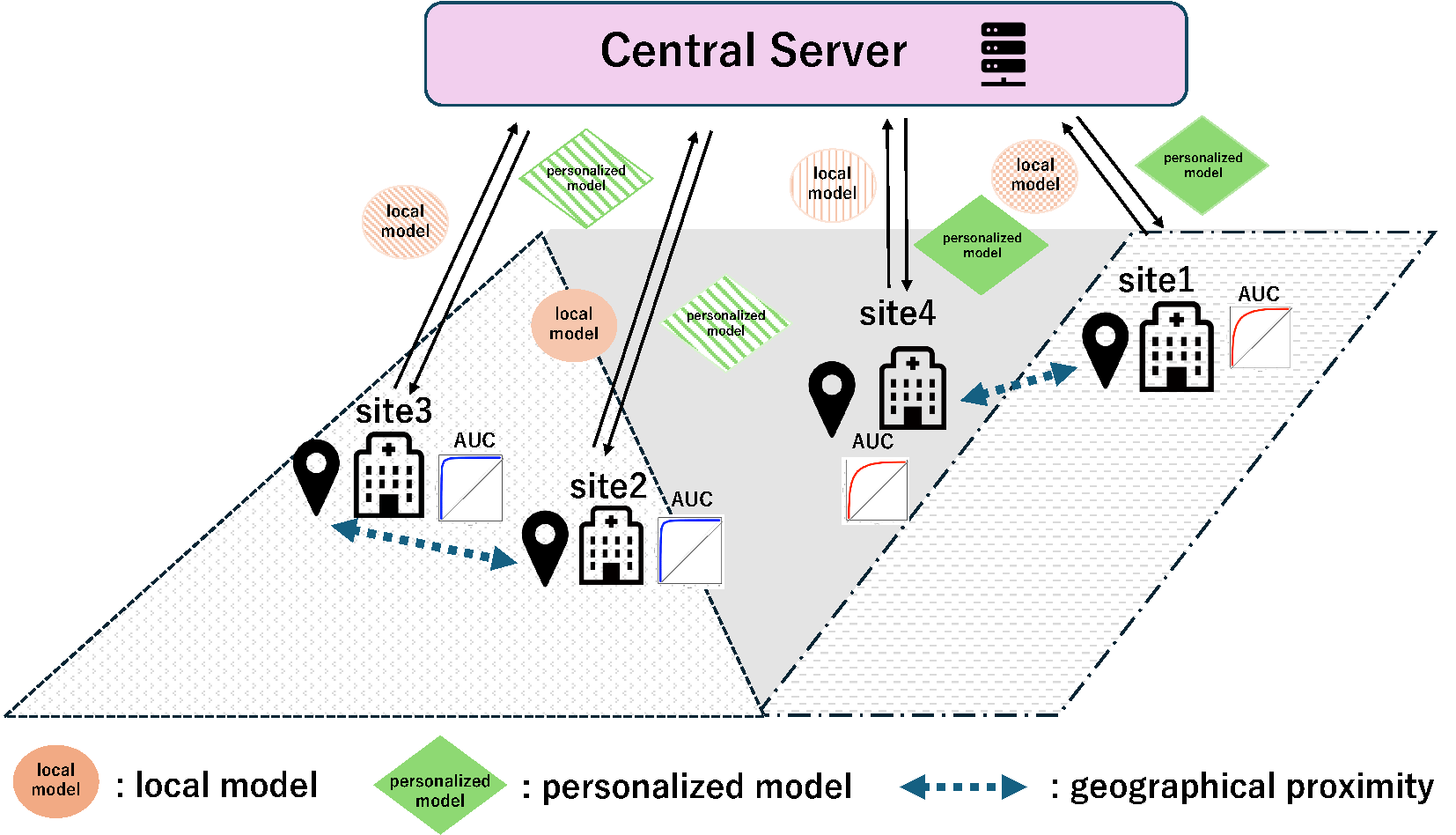}
\caption{Image of the framework of Personalized Federated Learning considering the geographical proximity.}
\label{pfl_frw}
\end{center}
\end{figure}

We first introduce the framework of GrAUC-PFL. 
Let $z=1, 2, \dots, Z$ denote institutions, each of which is treated as a client with the FL framework and has its own local dataset consisting of outcomes and covariates for the disease, along with geographical information such as latitude and longitude. Because raw data cannot be shared externally, each institution aims to estimate its own model parameter $\bm{\beta}_z$ based solely on the relevant local data.

The objective of this study is to utilize information from multiple institutions and directly maximize AUC, as a measure of discriminative performance, through a unified PFL framework with geographical proximity. To account for this spatial structure, we assume that proximate institutions tend to share similar risk factors, and incorporate this similarity into the model through a regularization structure that encourages similar parameters among nearby institutions. Specifically, we adopt an $L_1$-type penalty term based on  differences between the coefficient of selected pairs of institutions, as in spatially clustered coefficients (SCC) \citep{SCC}. The pairs of institutions are determined by the edges of a graph constructed based on their geographical locations. If the difference between a pairs of coefficients is non-zero, the edge represents a boundary; otherwise, the institutions are regarded as belonging to the same cluster. To construct the graph, a minimum spanning tree (MST) is employed to select the edge as in the SCC. In this method, $Z-1$ edges connecting all institutions are constructed at minimum. This significantly reduces  computational cost compared to a complete graph, which has $Z(Z-1)/2$ edges. This formulation is also related to fused lasso regularization \citep{fused_lasso}, allowing existing algorithms to be readily applied. Moreover, when the number of subject in a institution is small, estimation of the coefficients may be unstable. By incorporating information from neighboring institutions, the estimation accuracy can be improve compared to independent estimation. 

Next, we describe the communication scheme between the server and local participating institutions for parameter updates. We adopt the the PFL framework proposed by \cite{perFL-RSR}. In this framework, the server and each institution have distinct roles. Each institution maintains its own local model and updates its model parameters using only local data. Meanwhile, the server coordinates the communication among institutions and enforces the graph-based constraints that capture the relationships between institutions.  During each iteration, only model-related information is exchanged between the server and the institutions, while patient-level data remain local. By iteratively exchanging this information, each institution can estimate a personalized model that reflects both its local data and the information shared across related institution. The proposed framework is illustrated in Figure \ref{pfl_frw}.

\subsection{Optimization problem of GrAUC-PFL}
\label{subsec:opt_prob}

In this section, we define the optimization problem of GrAUC-PFL. 
For institutions $z = 1, \dots, Z$, each institution has its own local dataset $\{(\bm{x}_{hz}, y_{hz})\}_{h=1}^{N_z}$, where $\bm{x}_{hz} \in \mathbb{R}^p$ is a covariate vector and $y_{hz} \in \{0,1\}$ is a binary outcome. For the covariates at institution $z$, $\{\bm{x}_{iz}^{+}\}_{i=1}^{n_z}$ and $\{\bm{x}_{jz}^{-}\}_{j=1}^{m_z}$ represent the positive and negative samples, respectively. Here, $N_z=n_z+m_z$. Also, we consider the geographical proximity between institutions, defined through a distance measure between them, and incorporate this information into the model to encourage similarity between nearby coefficient vectors. In this study, we use the distance computed from latitude and longitude, and those in the institution $z$ denote $\bm{s}_z = ({\rm lon}_z,  {\rm lat}_z) \in \mathbb{R}^2$, where ${\rm lon}_z$ denote longitude and ${\rm lat}_z$ latitude. 

We consider the linear scoring function:
\begin{align*}
f_z(\bm{x}_z) = \bm{\beta}(\bm{s}_z)^\top\bm{x}_z. 
\end{align*}
To simplify the notation, we shall denote $\bm{\beta}(\bm{s}_z)$ by $\bm{\beta}_z$. The AUC at institution $z$ can be interpreted as the probability that a positive sample is ranked higher than a negative one.
\begin{align*}
\mathrm{AUC}_z
&= \frac{1}{n_z m_z}
\sum_{i =1}^{n_z}
\sum_{j =1}^{m_z}
I\!\left(
f_z(\bm{x}^+_{iz}) > f_z(\bm{x}^-_{jz})
\right)\\
&= \frac{1}{n_z m_z}
\sum_{i =1}^{n_z}
\sum_{j =1}^{m_z}
I\!\left(
\bm{\beta}_z^\top (\bm{x}^+_{iz} - \bm{x}^-_{jz}) > 0
\right)
\end{align*}
However, because the indicator function is non-differentiable, it is difficult to optimize this objective directly. Therefore, we replace this with a pairwise logistic loss as a surrogate function at institution $z$.
\begin{align*}
\ell_z\!\left(\bm{\beta}_z\right)
=
\frac{1}{n_z m_z}
\sum_{i =1}^{n_z}
\sum_{j =1}^{m_z}
\log \left( 1 + \exp [-(\bm{\beta}_z^\top\bm{x}_{iz}^{+}-\bm{\beta}_z^\top\bm{x}_{jz}^{-}) - q  ] \right)
\end{align*}
where $q \ge 0$ is the margin parameter. Using this surrogate loss, the optimization problem of GrAUC-PFL is formulated as follows:
\begin{align}
\mathcal{L}(\bm{\beta})
=  \frac{1}{Z}\sum_{z=1}^Z
\frac{1}{n_z m_z} \sum_{i=1}^{n_z} \sum_{j=1}^{m_z} 
\log \big(1+\exp[-(\bm{\beta}_z^\top \bm{x}_{iz}^{+} - \bm{\beta}_z^\top \bm{x}_{jz}^{-} )-q] \big)& \nonumber\\
+ \lambda 
\sum_{(u,v)\in E} w_{uv} \|\bm{\beta}_u - \bm{\beta}_v  \|_1
\label{prop_obj}
\end{align}
where $\lambda (>0)$ is a tuning parameter and $\| \cdot \|_1$ denotes $L_1$ norm. The first term of Eq. (\ref{prop_obj}) is to optimize the surrogate function of AUC maximization. The parameter $\bm{ \beta}_z$ is updated on each local institution and consolidated on the server, and updated as $\bm{\beta}$. The second term is a weighted fused penalty imposed on the differences between institution-specific coefficient vectors. $\{u,v\} \in E$ represents an edge connecting institutions $u$ and $v$, where $E$ is the edge set constructed from MST. This penalty encourages geographically close institutions to have similar coefficient vectors, and may reduce the difference between certain pairs to exactly zero. Consequently, the institutions connected through the graph can share identical parameter vectors, yielding a clustering structure among institutions in terms of their regression coefficients. $w_{uv}$ is a distance-based weight reflecting the geographical proximity between institutions $u$ and $v$ using a Gaussian kernel. The distance between the institutions $u$ and $v$ is defined as $m^\dagger_{uv} = d(\bm{s}_u, \bm{s}_v)$, where $d(\cdot, \cdot)$ is the distance metric. For this metric calculation, we use the Haversine formula. 
Based on this distance, we define the weight as
\begin{align}
w_{uv} = \exp \bigg( - \frac{m^{\dagger2}_{uv}}{2 \phi^2} \bigg).
\label{w_weight}
\end{align}
$\phi$ is a bandwidth parameter determined by $w_{dist} \cdot \phi_{base}$, where $\phi_{base}= {\rm median }\{ m^\dagger_{uv}: \{u,v\} \in E \}$. 
The weight $w_{uv}$ decreases as the distance $m^\dagger_{uv}$ increases, assigning larger weights to geographically closer institutions, consistent with the principle of spatial weighting that observations at closer distances receive greater influence\citep[e.g.][]{GWR}. The bandwidth $\phi$ determines how quickly the weight decays with distance: smaller $\phi$ lead to more rapidly decreasing weights, while larger values result in more uniform weighting across institutions. Here, $\phi_{base}$ is set to the median of the pairwise distances over the edges in the MST. This is motivated by the median heuristic commonly used in the bandwidth of the Gaussian kernel and is widely adopted in kernel methods \citep[e.g.][]{medheu, gertton}. This provides a robust estimate of the typical distance scale, as MST captures relationship between institutions in local neighborhood. The MST-based median provides a scale that is consistent with the graph structure used in the regularization. On the other hand, since the appropriate scale depends on the data distribution, the scaling parameter $w_{dist} (\geq 0)$ is selected via cross-validation.

\subsection{Reformulation of GrAUC-PFL}
\label{subsec:reform}

In the framework of PFL, parameter $\bm{\beta}_z$ needs to be updated at each institution by using its own local data. To update these parameters by distributed learning efficiently, we adpot the ADMM. However, the second term couples institution-specific parameters, preventing the update from being decomposed across institutions and making the problem difficult to solve. To address this issue, it needs to reformulate the pairwise differences between coefficients of connected institutions into an alternative representation. First, the pairwise differences between the coefficients are expressed as
\begin{equation}
\left[ 
\begin{array}{c}
\bm{\beta}_{u_1} -\bm{\beta}_{v_1} \\
\bm{\beta}_{u_2} -\bm{\beta}_{v_2}\\
\vdots\\
\bm{\beta}_{u_{|E|}} -\bm{\beta}_{v_{|E|}}\\
\end{array}
\right] 
\in \mathbb{R}^{|E|p}, \ {\rm where} \ {(u_o,v_o)\in E} \ (o=1, 2, \dots, |E|).
\label{prop_2nd_diff}
\end{equation}
As the difference structure of the coefficients is defined based on MST, we begin by using the edges of the graph to represent the relationship between institutions. From the graph $(V^\dagger, E)$, where $V^\dagger$ denotes a set of $Z$ locations, and $E$ denotes the set of edges. After computing pairwise distances based on the coordinate data, an adjacency matrix is constructed to represent geographical proximity, and a graph is defined accordingly. We use the MST to determine the edge set $E$. MST connects all institutions while minimizing the total pairwise distance and results in a sparse structure with $|E|=Z-1$ edges. 
Based on $E$, the incidence matrix $\bm{D} = (d_{oz}) \in \mathbb{R}^{|E| \times Z}$ is defined as follows:
\begin{align}
 \begin{cases}
  d_{oz} =  +1 \ (z=u_o)\\
  d_{oz} =  -1 \ (z=v_o)\\
  d_{oz} = 0 \ \ ({\rm otherwise}) \\
\end{cases}, \quad (o=1, 2, \dots, |E|)
\label{incidence}
\end{align}
Next, since $\bm{\beta}_z$ is a institution-specific vector, all coefficient vectors are concatenate into a single vector:
\begin{align*}
\bm{\beta}=vec(\bm{\beta}_1,\bm{\beta}_2, \dots, \bm{\beta}_Z) \in \mathbb{R}^{Zp}.
\end{align*}
where $vec(\cdot)$ represents a vec operator.
This formulation allows the difference between each pair of coefficients to be expressed using linear operators. Here, the incidence matrix $\bm{D}$ defines a scalar difference operator, while $\bm{\beta}$ is vector-valued. Therefore, $\bm{D}$ must be extended to act on vector-valued parameters. To achieve this, the Kronecker product is applied with $\bm{I}_p$ and denoted by $\bm{\Omega}$:
\begin{align*}
\bm{D} \otimes \bm{I}_p \in \mathbb{R}^{|E|p\times Zp} = \bm{\Omega}.
\end{align*}
where $\otimes$ is the Kronecker product and $\bm{I}_p$ is the identity matrix.

Using $\bm{\Omega}$, the pairwise differences between institution-specific coefficient vectors can be expressed as
\begin{equation*}
 (\bm{D} \otimes \bm{I}_p)\bm{\beta} 
 =
\bm{\Omega}\bm{\beta}
\end{equation*}
where the resulting vector is obtained by summing the differences along all edges. 
$\bm{\Omega}\bm{\beta}$ is equivalent to Eq. (\ref{prop_2nd_diff}). 
Therefore, the objective function in Eq. (\ref{prop_obj}) can be rewritten as
\begin{align}
\min_{ \{ \bm{\beta} \}} 
 \frac{1}{Z}\sum_{z=1}^Z
\frac{1}{n_z m_z} \sum_{i=1}^{n_z} \sum_{j=1}^{m_z} 
\log \big(1+\exp[-(\bm{\beta}_z^\top \bm{x}_{iz}^{+} - \bm{\beta}_z^\top \bm{x}_{jz}^{-} )-q] \big) 
+ \lambda  \|\bm{W} \bm{\Omega}\bm{\beta} \|_1
\label{prop_2_omega}
\end{align}
where $\bm{W} =\mathrm{diag}(\bm{w}) \otimes \bm{I}_p \in \mathbb{R}^{|E|p \times |E|p}$ is a block-diagonal weight matrix. From here, to simplify the notation, we shall represent the subscripts of $w_{uv}$ using a single index and express it as a vector as $\bm{w}=(w_1, w_2, \dots, w_{|E|})$. Furthermore, to apply Eq. (\ref{prop_2_omega}) to the framework of PFL and utilize the ADMM algorithm, the objective function in Eq. (\ref{prop_2_omega}) can be rewritten with the constraint as
\begin{align}
\min_{ \{ \bm{\beta}, \bm{\delta} \}} 
 \frac{1}{Z}\sum_{z=1}^Z
\frac{1}{n_z m_z} \sum_{i=1}^{n_z} \sum_{j=1}^{m_z} 
\log \big(1+\exp[-(\bm{\beta}_z^\top \bm{x}_{iz}^{+} - \bm{\beta}_z^\top \bm{x}_{jz}^{-} )-q] \big) 
+ \lambda  \| \bm{W}\bm{\delta} \|_1, \quad \mathrm{s.t.}  \ \bm{\Omega}\bm{\beta}=\bm{\delta}.
\label{prop_2}
\end{align}
$\bm{\delta}$ denotes the vector whose elements are the difference between the coefficients $u$ and $v$ for each variable. This reformulation enables the use of the ADMM because it separates the coupled terms and facilitates tractable optimization. Specifically, the introduction of the auxiliary variable allows the optimization problem to be decomposed into subproblems. In the framework of the PFL, the updates for $\bm{\beta}$ must be performed locally because they require data from each institution, whereas the global variables, including $\bm{\delta}$, are updated on the server side. Therefore, the communication overhead between the server and each institution associated with the updates can be reduced. Eq. (\ref{prop_2}) can be solved based on the corresponding Lagrangian function, which can be defined as follows: 
\begin{align}
\mathcal{L}_\rho(\bm{\beta}, \bm{\delta} ,\bm{\gamma} )
=&
 \frac{1}{Z}\sum_{z=1}^Z
\frac{1}{n_z m_z} \sum_{i=1}^{n_z} \sum_{j=1}^{m_z} 
\log \big(1+\exp[-(\bm{\beta}_z^\top \bm{x}_{iz}^{+} - \bm{\beta}_z^\top \bm{x}_{jz}^{-} )-q] \big)\nonumber \\
&+ \lambda \sum_{o=1}^{|E|}w_o \| \bm{\delta}_o \|_1 
+ \bm{\gamma}^\top(\bm{\Omega}\bm{\beta}- \bm{\delta}) 
+ \frac{\rho}{2} \|\bm{\Omega}\bm{\beta}- \bm{\delta}\|_2^2
\label{obj_lag}
\end{align}
where $\bm{\delta}= (\bm{\delta}_1, \bm{\delta}_2,\dots, \bm{\delta}_{|E|})^\top = (\delta_\xi), \  (\xi=1,2,\dots, |E|p)$ and $\bm{\delta}_o =(\delta_{o1}, \delta_{o2}, \dots, \delta_{o p})^\top \in \mathbb{R}^p$ is different vector corresponding to $o$, $\bm{\gamma} \in \mathbb{R}^{|E|p}$ is the Lagrangian multiplier and $\rho (\rho>0)$ is the tuning parameter. $\|\cdot \|_2^2$ is the Euclidean norm.

\subsection{Update \texorpdfstring{$\bm{\beta}$}{beta}}
\label{subsec:update_beta}

In this subsection, we explain how to derive the updated formula of $\bm{\beta}$. As $\bm{\beta}$ depends on the local data of each institution, within the PFL framework, the parameters must be updated at each institution. In the PFL framework proposed in \cite{perFL-RSR}, the updated formula of $\bm{\beta}$ can be decomposed into a server-side term that can be evaluated centrally and a client-specific term that depends on local data. This procedure enables parameter estimation that accounts for institutional heterogeneity while reducing computational costs and preserving data privacy. 
The updated formula of $\bm{\beta}$ is as follows:
\begin{align}
&\bm{\beta}^{(t+1)} 
= \bm{\beta}^{(t,g)}
- r^{-1} \tau \nabla f(\bm{\beta}^{(t)}), \label{upd_beta_all}\\
 {\rm where} \ & \bm{\beta}^{(t,g)}
= r^{-1}\bm{H}\bm{\beta}^{(t)}
- r^{-1} \tau
\big[ 
- \rho \bm{\Omega}^\top \bm{\delta}^{(t)}
+ \bm{\Omega}^\top \bm{\gamma}^{(t)}
\big].
\label{upd_beta_decomp}
\end{align}
$\bm{\beta}^{(t,g)}$ is the global component which depends on the regularization term and ADMM variables. $\bm{\beta}^{(t)}$ is $\bm{\beta}$ at $t$th step. $\nabla f(\bm{\beta})$ is the gradient of $f(\bm{\beta})$, which represents the first term of Eq. (\ref{obj_lag}). $\tau \ (\tau>0)$ is a step-size parameter that controls the curvature of the quadratic approximation. $\bm{H}$ is a positive definite matrix, which can be defined as follows:
\begin{align}
\bm{H}= r\bm{I} - \rho \tau \bm{\Omega}^\top \bm{\Omega}. 
\label{H_set}
\end{align}
$r \ (r>0)$ is a scaling parameter for $\bm{H}$, which is set as 
\begin{align}
r 
> \rho \tau \varsigma_{\max}(\bm{\Omega}^\top \bm{\Omega})
+ \max \bigg(\frac{\tau\mu }{2}, 1 \bigg).
\label{r_setting}
\end{align}
First, We explain the Lipschitz continuity of the gradient of $f(\bm{\beta})$ to derive this majorizing function. 

\begin{lem} The gradient of $f(\bm{\beta})$ is Lipschitz continuous with $\mu = \max_{z \in \{ 1,2,\dots, Z\} }(\mu_z)$, where $\mu_z = \varsigma_{\max} \big(
\frac{1}{4n_zm_z}
\sum_{i=1}^{n_z}
\sum_{j=1}^{m_z}
\bm{d}_{ijz}\bm{d}_{ijz}^{\top} \big)$, in which $\bm{d}_{ijz}=\bm{x}_{iz}^+ - \bm{x}_{jz}^-, \ (i=1,\dots, n_z; j=1, \dots, m_z)$. Here, $\varsigma_{\max}(\bm{O})$ describes the largest eigenvalue of matrix $\bm{O}$.
\end{lem}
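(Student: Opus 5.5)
The plan is to exploit the block-separable structure of $f(\bm{\beta})$ and bound its Hessian uniformly. Write
\begin{align*}
f(\bm{\beta}) = \frac{1}{Z}\sum_{z=1}^Z \ell_z(\bm{\beta}_z), \qquad \ell_z(\bm{\beta}_z)=\frac{1}{n_z m_z}\sum_{i=1}^{n_z}\sum_{j=1}^{m_z}\psi\big(\bm{\beta}_z^\top \bm{d}_{ijz}\big),
\end{align*}
with $\psi(t)=\log(1+\exp(-t-q))$. Since each summand depends on $\bm{\beta}_z$ only through $\bm{\beta}_z^\top\bm{d}_{ijz}$, $f$ is twice continuously differentiable. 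Its Hessian with respect to the stacked vector $\bm{\beta}\in\mathbb{R}^{Zp}$ is block diagonal, with $z$th block $\frac{1}{Z}\nabla^2\ell_z(\bm{\beta}_z)$.

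First, I compute the scalar derivatives. Let $\sigma(s)=1/(1+e^{-s})$. Then $\psi'(t)=-\sigma(-t-q)$ and $\psi''(t)=\sigma(-t-q)\{1-\sigma(-t-q)\}$. Because $\sigma(1-\sigma)\le 1/4$ on $\mathbb{R}$, we have $0\le\psi''(t)\le 1/4$ for every $t$, uniformly in the margin $q$. Hence
\begin{align*}
\nabla^2\ell_z(\bm{\beta}_z)=\frac{1}{n_zm_z}\sum_{i,j}\psi''\big(\bm{\beta}_z^\top\bm{d}_{ijz}\big)\,\bm{d}_{ijz}\bm{d}_{ijz}^\top \preceq \frac{1}{4n_zm_z}\sum_{i,j}\bm{d}_{ijz}\bm{d}_{ijz}^\top .
\end{align*}
Both sides are positive semidefinite, so the spectral norm of the Hessian block is at most $\mu_z$.

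Second, I pass from blocks to the whole Hessian. Because $\nabla^2 f(\bm{\beta})$ is block diagonal, its largest eigenvalue is the maximum over the blocks. Since $Z\ge 1$, this gives
\begin{align*}
\|\nabla^2 f(\bm{\beta})\|_2\le \frac{1}{Z}\max_z \mu_z\le \mu \quad \text{for all } \bm{\beta}.
\end{align*}
The $1/Z$ factor only tightens the bound, so $\mu$ remains a valid (possibly conservative) constant.

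Finally, I apply the integral form of the mean value theorem:
\begin{align*}
\nabla f(\bm{\beta})-\nabla f(\bm{\beta}')=\int_0^1\nabla^2 f\big(\bm{\beta}'+s(\bm{\beta}-\bm{\beta}')\big)\,ds\,(\bm{\beta}-\bm{\beta}').
\end{align*}
Taking norms yields $\|\nabla f(\bm{\beta})-\nabla f(\bm{\beta}')\|_2\le\mu\|\bm{\beta}-\bm{\beta}'\|_2$, which is the claim.

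None of the steps is deep. The point needing the most care is the bookkeeping of the block-diagonal structure together with the $1/Z$ weight. One has to justify that the operator norm of a block-diagonal matrix equals the maximum block norm, and state explicitly that the constant in the lemma ignores the $1/Z$ factor and is therefore an upper bound rather than the sharp constant. The other point is the uniform bound $\psi''\le 1/4$, which must hold independently of $q$ and of $\bm{\beta}$; this is immediate from $\sigma(1-\sigma)\le 1/4$.
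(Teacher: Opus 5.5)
Your proof is correct and follows essentially the same route as the paper. Both arguments use the bound $\sigma(1-\sigma)\le 1/4$ to get $\nabla^2 f_z \preceq \frac{1}{4n_zm_z}\sum_{i,j}\bm{d}_{ijz}\bm{d}_{ijz}^\top$, hence a spectral-norm bound $\mu_z$ on each block, and then the integral form of the mean value theorem; the only difference is bookkeeping. You bound the block-diagonal Hessian of the full $f$ and apply the mean value theorem once, whereas the paper applies it block by block and then sums the squared block differences; both arrive at the constant $\mu/Z\le\mu$.
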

The proof of Lemma 1 is shown in Appendix \ref{secA1}. 
Using the Lipschitz continuity of $\nabla f(\bm{\beta})$, which denotes the gradient of $f(\bm{\beta})$,  established in Lemma 1, the update of $\bm{\beta}$ is obtained by approximating the first term of Eq. (\ref{obj_lag}) with a quadratic function, while retaining the linear and quadratic penalty terms in their original form. 
When $\bm{H}$ is a positive definite matrix, the updated formula of $\bm{\beta}$ can be derived based on the following function:
\begin{align}
\tilde{\mathcal{L}}_\rho (\bm{\beta}; \bm{\beta}^{(t)}, \bm{\delta}^{(t)}, \bm{\gamma}^{(t)})
=f(\bm{\beta}^{(t)}) 
+ \nabla f(\bm{\beta}^{(t)})^\top(\bm{\beta}- \bm{\beta}^{(t)})
+ \frac{1}{2 \tau}(\bm{\beta} - \bm{\beta}^{(t)})^\top{\bm{H}}(\bm{\beta} - \bm{\beta}^{(t)})
+ \bm{\gamma}^\top\bm{\Omega}\bm{\beta}
+ \frac{\rho}{2} \|\bm{\Omega}\bm{\beta}- \bm{\delta}\|_2^2. 
\label{obj_beta_mm1}
\end{align}
where $\bm{\beta}^{(t)}$ is $\bm{\beta}$ at $t$th step. The following Theorem 2 shows that the results of the updated $\bm{\beta}$ yields a sufficient decrease in the augmented Lagrangian function of Eq. (\ref{obj_lag}). Here, Theorem 2 states that Eq. (\ref{obj_beta_mm1}) is the majorizing function \citep{MMalgorithm} of Eq. (\ref{obj_lag}) by choosing $r$ appropriately. Theorem 2 can be derived in the same manner as Lemma 1 in \cite{perFL-RSR}. 
\begin{thm}
For an update of $\bm{\beta}$, the difference in the augmented Lagrangian function satisfies the following condition when $r> \rho \tau \varsigma_{\max}(\bm{\Omega}^\top\bm{\Omega})+\max( \frac{\tau\mu}{2}, 1)$:
\begin{align}
\mathcal{L}_\rho (\bm{\beta}^{(t+1)}, \bm{\delta}^{(t)}, \bm{\gamma}^{(t)})
- \mathcal{L}_\rho (\bm{\beta}^{(t)}, \bm{\delta}^{(t)}, \bm{\gamma}^{(t)}) 
\leq 
- \bigg[
\frac{\varsigma_{\min}(\bm{H})}{\tau}
+ \frac{\rho \cdot \varsigma_{\min}(\bm{\Omega}^\top \bm{\Omega})}{2}
- \frac{\mu}{2}
\bigg]
\| \bm{\beta}^{(t+1)}- \bm{\beta}^{(t)}\|_2^2.
\label{perfl_lemma1}
\end{align}
where $\varsigma_{max}(\bm{O})$ and $\varsigma_{min}(\bm{O})$denote the largest and smallest eigenvalue of matrix $\bm{O}$, respectively. This property follows from the uniform boundedness of the Hessian of the logistic loss.
\end{thm}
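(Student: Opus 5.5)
The plan is to compare $\mathcal{L}_\rho$ at $\bm{\beta}^{(t+1)}$ and $\bm{\beta}^{(t)}$ directly, with $\bm{\delta}^{(t)},\bm{\gamma}^{(t)}$ held fixed. Write $\bm{\Delta}=\bm{\beta}^{(t+1)}-\bm{\beta}^{(t)}$, $\bm{a}=\bm{\Omega}\bm{\beta}^{(t+1)}-\bm{\delta}^{(t)}$ and $\bm{b}=\bm{\Omega}\bm{\beta}^{(t)}-\bm{\delta}^{(t)}$, so that $\bm{a}-\bm{b}=\bm{\Omega}\bm{\Delta}$. The $\lambda\|\bm{W}\bm{\delta}\|_1$ term cancels, so the difference splits into three pieces: $f(\bm{\beta}^{(t+1)})-f(\bm{\beta}^{(t)})$, the linear piece $\bm{\gamma}^{(t)\top}\bm{\Omega}\bm{\Delta}$, and the quadratic piece $\frac{\rho}{2}(\|\bm{a}\|_2^2-\|\bm{b}\|_2^2)$.

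\textbf{Step 1 (smooth term).} First I use Lemma 1. The Hessian of $f$ is block diagonal over institutions. Each block is at most $\frac{1}{Z}$ times a matrix dominated by $\frac{1}{4n_zm_z}\sum_{i,j}\bm{d}_{ijz}\bm{d}_{ijz}^\top$, because the second derivative of $t\mapsto\log(1+e^{-t-q})$ is bounded by $1/4$. Hence $\nabla f$ is $\mu$-Lipschitz, and the descent lemma gives
\begin{align*}
f(\bm{\beta}^{(t+1)})-f(\bm{\beta}^{(t)})\le \nabla f(\bm{\beta}^{(t)})^\top\bm{\Delta}+\frac{\mu}{2}\|\bm{\Delta}\|_2^2 .
\end{align*}

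\textbf{Step 2 (optimality of the update).} Next I verify that the update (\ref{upd_beta_all})--(\ref{upd_beta_decomp}) is exactly the minimizer of the strongly convex quadratic (\ref{obj_beta_mm1}). Its first-order condition is
\begin{align*}
\nabla f(\bm{\beta}^{(t)})+\tfrac{1}{\tau}\bm{H}\bm{\Delta}+\bm{\Omega}^\top\bm{\gamma}^{(t)}+\rho\bm{\Omega}^\top\bm{a}=\bm{0}.
\end{align*}
Since $\bm{H}=r\bm{I}-\rho\tau\bm{\Omega}^\top\bm{\Omega}$, we have $\frac{1}{\tau}\bm{H}+\rho\bm{\Omega}^\top\bm{\Omega}=\frac{r}{\tau}\bm{I}$. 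Solving the condition for $\bm{\beta}^{(t+1)}$ therefore reproduces the stated formula, and no matrix inversion is needed. Taking the inner product of the condition with $\bm{\Delta}$ gives
\begin{align*}
\nabla f(\bm{\beta}^{(t)})^\top\bm{\Delta}+\bm{\gamma}^{(t)\top}\bm{\Omega}\bm{\Delta}=-\tfrac{1}{\tau}\bm{\Delta}^\top\bm{H}\bm{\Delta}-\rho\,\bm{a}^\top\bm{\Omega}\bm{\Delta}.
\end{align*}

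\textbf{Step 3 (quadratic term and assembly).} For the quadratic piece I use the identity $\|\bm{a}\|_2^2-\|\bm{b}\|_2^2=2\bm{a}^\top\bm{\Omega}\bm{\Delta}-\|\bm{\Omega}\bm{\Delta}\|_2^2$. Adding the three pieces, the cross terms $\rho\,\bm{a}^\top\bm{\Omega}\bm{\Delta}$ cancel, leaving
\begin{align*}
\mathcal{L}_\rho(\bm{\beta}^{(t+1)},\cdot)-\mathcal{L}_\rho(\bm{\beta}^{(t)},\cdot)\le -\tfrac{1}{\tau}\bm{\Delta}^\top\bm{H}\bm{\Delta}-\tfrac{\rho}{2}\|\bm{\Omega}\bm{\Delta}\|_2^2+\tfrac{\mu}{2}\|\bm{\Delta}\|_2^2 .
\end{align*}
Bounding $\bm{\Delta}^\top\bm{H}\bm{\Delta}\ge\varsigma_{\min}(\bm{H})\|\bm{\Delta}\|_2^2$ and $\|\bm{\Omega}\bm{\Delta}\|_2^2\ge\varsigma_{\min}(\bm{\Omega}^\top\bm{\Omega})\|\bm{\Delta}\|_2^2$ then yields (\ref{perfl_lemma1}).

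\textbf{Role of the condition on $r$ and main obstacle.} The condition on $r$ enters in two places. It gives $\varsigma_{\min}(\bm{H})\ge r-\rho\tau\varsigma_{\max}(\bm{\Omega}^\top\bm{\Omega})>\max(\tau\mu/2,1)>0$, so $\bm{H}$ is positive definite and (\ref{obj_beta_mm1}) has a unique minimizer. The same inequality gives $\varsigma_{\min}(\bm{H})/\tau>\mu/2$, so the bracketed constant is positive and the bound is a genuine sufficient decrease. I expect the main obstacle to be the sign bookkeeping in Step 2, namely checking that the paper's explicit update really is the stationary point of (\ref{obj_beta_mm1}) with the stated $\bm{H}$. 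The rest is mechanical once that identification is made.
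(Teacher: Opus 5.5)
Your proposal is correct and follows essentially the same route as the paper's proof. Both pair the first-order condition of the majorizer (\ref{obj_beta_mm1}) with $\bm{\Delta}$, bound the smooth term by the descent lemma from Lemma 1, rewrite the $\rho$-term with a polarization identity (your $\|\bm{a}\|_2^2-\|\bm{b}\|_2^2=2\bm{a}^\top\bm{\Omega}\bm{\Delta}-\|\bm{\Omega}\bm{\Delta}\|_2^2$ is a repackaging of the paper's $(\bm{a}-\bm{b})^\top(\bm{b}-\bm{c})$ identity) so that the cross terms cancel, and finish with Rayleigh-quotient bounds on $\bm{H}$ and $\bm{\Omega}^\top\bm{\Omega}$, with the condition on $r$ serving only to make $\bm{H}$ positive definite and the bracketed constant positive.
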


The proof of Theorem 2 is in Appendix \ref{secB}. 
Using Eq. (\ref{obj_beta_mm1}), the updated formula of $\bm{\beta}$ is defined as
\begin{align}
\bm{\beta}^{(t+1)} = r^{-1} \bm{H} \bm{\beta}^{(t)}
- r^{-1} \tau\big[ 
\nabla f(\bm{\beta}^{(t)})
- \rho \bm{\Omega}^\top \bm{\delta}^{(t)}
+ \bm{\Omega}^\top \bm{\gamma^{(t)}}
\big].
\label{upd_beta2}
\end{align}
When $\bm{H}$ is just a positive definite, the updated formula of $\bm{\beta}$ includes the inverse of $Zp \times Zp$ matrix. 
Compared with direct updates with the inverse matrix, whose computational cost is $O\{ (Zp)^3\}$, the update of Eq. (\ref{upd_beta2}), avoiding matrix inversion requires matrix-vector multiplications. Therefore, the computational cost is reduced to $O\{(Zp)^2 \}$. 
Here, in Eq. (\ref{upd_beta2}), only $\nabla f (\bm{\beta}^{(t)})$ depends on the data at each institution, while the remaining components are handled on the server. Therefore, to reduce the computational load, \cite{perFL-RSR} decomposed Eq. (\ref{upd_beta2}), as Eq. (\ref{upd_beta_all}) and Eq. (\ref{upd_beta_decomp}). $\bm{\beta}^{(t,g)}$ is the global component which depends on the regularization term and ADMM variables. This is computed at the central server. By contrast, the gradient of the loss function $\nabla f(\bm{\beta})$ is the local component. 
$\nabla f(\bm{\beta})$ is can be decomposed into $\nabla f_z(\bm{\beta}_z)$, which can be computed independently at each institution. 
As a result, updating $\bm{\beta}^{(t,g)}$ is performed on the server first, then $\bm{\beta}^{(t+1)}$ is computed at each institution as $\bm{\beta}_z^{(t+1)}$. 

\noindent
{\bf Update on server}

To update $\bm{\beta}$, Eq. (\ref{upd_beta_decomp}) is updated first on the server. After the global component is updated, the server distributes $\bm{\beta}^{(t,g)}_z$ to each institution to update $\bm{\beta}_z$.

\noindent
{\bf Update at local institutions}

Each local institution downloads $\bm{\beta}^{(t,g)}_z$ and updates $\bm{\beta}_z$. 
Let $f_z= \frac{1}{n_z m_z}
\sum_{i=1}^{n_z} \sum_{j=1}^{m_z}
\log \big(1+\exp[-(\bm{\beta}_z^\top \bm{d}_{ijz} )-q] \big)$, then $\bm{\beta}_z$ can be updated by the following:
\begin{align}
\bm{\beta}_z^{(t+1)} = \bm{\beta}_z^{(t,g)} - r^{-1} \tau \nabla f_z (\bm{\beta}_z^{(t)}) 
\label{upd_beta_term1}
\end{align}
where
\begin{align*}
\nabla f_z (\bm{\beta}_z)
=\frac{1}{n_z m_z} 
\sum_{i=1}^{n_z} \sum_{j=1}^{m_z}
\big(- \bm{d}_{ijz} \sigma(- (\bm{\beta}_z^\top \bm{d}_{ijz}+q)) \big). 
\end{align*}
Here, $\sigma(a) = 1/(1+\exp^{-a}).$ 
Updated $\bm{\beta}_z$ is sent back to the server and merged as $\bm{\beta}^{(t+1)}$ by updating Eq. (\ref{upd_beta_decomp}).

\begin{remark}
Unless $\bm{H} \neq  r\bm{I} - \rho \tau \bm{\Omega}^\top \bm{\Omega}$, even if $\bm{H}$ is a positive semidefinite matrix, then the updated formula for $\bm{\beta}$ in Eq. (\ref{obj_beta_mm1}) involves the inverse matrix. Therefore, maintaining the inverse matrix on the central server can become a significant burden, particularly when the number of variables or the number of participating institutions is large. From the right-hand side of Eq. (\ref{obj_beta_mm1}), the terms related to $\bm{\beta}$ are
\begin{align}
\nabla f(\bm{\beta}^{(t)})^\top(\bm{\beta}- \bm{\beta}^{(t)})
+ \frac{1}{2 \tau}(\bm{\beta} - \bm{\beta}^{(t)})^\top{\bm{H}}(\bm{\beta} - \bm{\beta}^{(t)})
+ \bm{\gamma}^\top\bm{\Omega}\bm{\beta}
+ \frac{\rho}{2} \|\bm{\Omega}\bm{\beta}- \bm{\delta}\|_2^2
+ \mathrm{const},
\label{terms_beta}
\end{align}
where $\mathrm{const}$ is a constant value. Differentiate Eq. (\ref{terms_beta}) by $\bm{\beta}$ and set it as $\bm{0}$;
\begin{align*}
\nabla f(\bm{\beta}^{(t)})
+ \frac{1}{\tau}{\bm{H}}(\bm{\beta} - \bm{\beta}^{(t)})
+ \bm{\Omega}^\top\bm{\gamma}
+ \rho \bm{\Omega}^\top(\bm{\Omega}\bm{\beta}- \bm{\delta})
&=\bm{0}\\
\Longleftrightarrow \quad
\nabla f(\bm{\beta}^{(t)}) 
+ \tau^{-1}\bm{H}\bm{\beta}
- \tau^{-1}\bm{H}\bm{\beta}^{(t)}
+ \bm{\Omega}^\top\bm{\gamma}
+ \rho \bm{\Omega}^\top\bm{\Omega}\bm{\beta}
-\rho \bm{\Omega}^\top\bm{\delta}
&=\bm{0}\\
\Longleftrightarrow 
\bigg(\tau^{-1}\bm{H} + \rho \bm{\Omega}^\top\bm{\Omega}\bigg)
\bm{\beta} 
=
\tau^{-1}\bm{H}\bm{\beta}^{(t)}
-\nabla f(\bm{\beta}^{(t)})
- \bm{\Omega}^\top\bm{\gamma}
+ \rho \bm{\Omega}^\top\bm{\delta} 
\end{align*}
From Eq. (\ref{terms_beta}), we can obtain the updated formula of $\bm{\beta}$ as
\begin{align}
\bm{\beta}^{(t+1)} =& (\tau^{-1}\bm{H}+\rho \bm{\Omega}^\top \bm{\Omega})^{-1}
 \big[ 
\tau^{-1}\bm{H}\bm{\beta}^{(t)} 
- \nabla f(\bm{\beta}^{(t)})
+ \rho \bm{\Omega}^\top \bm{\delta}^{(t)}
- \bm{\Omega}^\top \bm{\gamma^{(t)}}
\big].
\label{upd_beta1}
\end{align}
Therefore, in \cite{perFL-RSR}, $\bm{H}$ is defined as Eq. (\ref{H_set}), which allows calculation without requiring the inverse matrix. 
To obtain this,  we substitute Eq. (\ref{H_set}) into the first term of Eq. (\ref{upd_beta1}):
\begin{align*}
\tau^{-1}\bm{H}+\rho\bm{\Omega}^\top\bm{\Omega}=\tau^{-1}(r\bm{I}-\rho \tau\bm{\Omega}^\top\bm{\Omega})+ \rho\bm{\Omega}^\top\bm{\Omega}= r \tau^{-1}\bm{I},
\end{align*}
then, 
\begin{align*}
(\tau^{-1}\bm{H}+\rho \bm{\Omega^\top \Omega})^{-1}= r \tau^{-1}\bm{I}.
\end{align*}
Therefore, with $\bm{H}$ as in Eq. (\ref{H_set}), the updated formula of $\bm{\beta}$, the inverse matrix is reduced to a scalar multiple of the identity matrix, thereby avoiding the need to compute the inverse matrix directly.
\end{remark}

\subsection{Update \texorpdfstring{$\bm{\delta}$}{delta}}
\label{subsec:update_delta}

Next, $\bm{\delta}$ is updated on the server after $\bm{\beta}$ has been updated. The updated formula of $\bm{\delta} = (\delta_\xi)$ is as follows:
\begin{align}
\delta_\xi^{(t+1)}
=
S_\psi \big(\nu^{(t)}_\xi \big),
\quad (\xi= 1, 2, \dots, |E|p) 
\label{delta_upd}
\end{align}
where $S_\psi(\cdot)$ denotes soft thresholding operator. $\psi = \frac{\lambda \tilde{w}_\xi}{\rho}$, $\bm{\nu}^{(t)} = (\nu^{(t)}_\xi)$ is defined as $\bm{\Omega}\bm{\beta}^{(t)} + \rho^{-1} \bm{\gamma}$. $\tilde{w}_\xi$ denotes the edge weight $w_{uv}$ corresponding to $\delta_\xi$.
Now we explain the procedure for deriving Eq. (\ref{delta_upd}). 
The terms associated with $\bm{\delta}$ in Eq. (\ref{obj_lag}) are
\begin{align}
 \lambda  \sum_{o=1}^{|E|}w_o \| \bm{\delta}_o \|_1  
+ \bm{\gamma}^\top(\bm{\Omega}\bm{\beta}- \bm{\delta}) 
+ \frac{\rho}{2} \|\bm{\Omega}\bm{\beta}- \bm{\delta}\|_2^2.
\label{obj_delta}
\end{align}

Transform Eq. (\ref{obj_delta}):
\begin{align}
\rho^{-1}\bm{\gamma}^\top(\bm{\Omega}\bm{\beta}- \bm{\delta}) 
+ \frac{1}{2} \|\bm{\Omega}\bm{\beta}- \bm{\delta}\|_2^2
+ \frac{\lambda}{\rho} \sum_{o=1}^{|E|}w_o \| \bm{\delta}_o \|_1    
= \frac{1}{2}  \| \bm{\delta} - (\bm{\Omega}\bm{\beta}
 + \rho^{-1} \bm{\gamma} ) \|_2^2
+  \frac{\lambda}{\rho}  \sum_{o=1}^{|E|}w_o \| \bm{\delta}_o \|_1.
\label{obj_delta2}
\end{align}
Then, we divide the right-hand side of Eq. (\ref{obj_delta2}) into differentiable term and other term.
\begin{align*}
 \begin{cases}
\mathcal{F}(\bm{\delta}) =  \frac{1}{2}  \| \bm{\delta} - (\bm{\Omega}\bm{\beta}+ \rho^{-1} \bm{\gamma} ) \|_2^2 \\
\mathcal{G}(\bm{\delta}) =  \frac{\lambda}{\rho} \sum_{o=1}^{|E|}w_o \| \bm{\delta}_o \|_1
\end{cases}
\end{align*}
Using this, the proximity operator of $\mathcal{G}(\bm{\delta})$ is defined as follows:
\begin{align}
\mathrm{prox}_{\psi_o \| \cdot\|_1}(\bm{\nu}_o) 
= \argmin_{\bm{\delta}_o}(\psi_o \| \bm{\delta}_o \|_1 
+  \frac{1}{2} \| \bm{\delta}_o - \bm{\nu}_o \|_2^2 
).
\label{delta_proc}
\end{align}
Eq. (\ref{delta_proc}) is separable with respect to each component. Therefore, the problem reduces to solving, for each component $\xi$:
\begin{align*}
\argmin_{\delta_\xi} \psi |\delta_\xi| + \frac{1}{2}(\delta_\xi - v_\xi)^2.
\end{align*}
Therefore, the minimizer is obtained as Eq. ( \ref{delta_upd}).

\subsection{Update \texorpdfstring{$\bm{\gamma}$}{gamma}}
\label{subsubsec:update_gamma}

The update of $\bm{\gamma}$ is performed by the deviation from the constraint, using the gradient descent method as
\begin{align}
\bm{\gamma}^{(t+1)}
= \bm{\gamma}^{(t)} + \rho (\bm{\Omega}\bm{\beta}^{(t+1)}- \bm{\delta}^{(t+1)}). 
\label{gamma_upd}
\end{align}

The detail of the Algorithm of GrAUC-PFL is shown in $\mathrm{Algorithm}\ 1$.

{\footnotesize
\begin{spacing}{1.0}
\begin{algorithm}[H]
    \caption{Geographically regularized AUC-maximizing personalized federated learning (GrAUC-PFL)}
    \small
\footnotesize
    \label{alg1}
    \begin{algorithmic}[1]    
    \Require $\bm{X}_z  (z=1, 2, \cdots, Z), \bm{y}_z, \bm{\Omega}, \bm{H}, r, w_{uv}, \rho, \tau, q$ 
    \Ensure $\bm {\beta}_z \ (z=1, 2, \dots, Z), \bm {\delta}, \bm {\gamma}$
    \State Set $t \leftarrow 1$
    \State Set initial values ${\bm \beta_z^{(0)}}$, $\bm{\delta}^{(0)}$ and $\bm{\gamma}^{(0)}$
    \While{ $\mathcal{L}_\rho^{(t)} - \mathcal{L}_\rho^{(t+1)}$ $\geq \epsilon$ }
    \State {\bf Update $\bm{B}$}:
    \Statex {\bf Server}:
    \State Update $\bm{\beta}^{(t,g)} \leftarrow r^{-1}\bm{H}\bm{\beta}^{(t)} - r^{-1} \tau \big[ - \rho \bm{\Omega}^\top \bm{\delta}^{(t)} + \bm{\Omega}^\top \bm{\gamma^{(t)}}\big]$
    \Statex {\bf Local institution}:
    \Statex \quad Download $\bm{\beta}_z^{(t)}$ from Server
    \State Update $\bm{\beta}_z^{(t+1)} \leftarrow\bm{\beta}_z^{(t,g)} - r^{-1} \tau \nabla f_z (\bm{\beta}_z^{(t)}), \ $ where $\nabla f_z (\bm{\beta}_z)=\frac{1}{n_z m_z} \sum_{i=1}^{n_z} \sum_{j=1}^{m_z} \big(- \bm{d}_{ijz} \sigma(- (\bm{\beta}_z^\top \bm{d}_{ijz}+q)) \big)$
    \Statex \quad Upload $\bm{\beta}_z^{(t+1)}$ to Server
    \Statex {\bf Server}:
    \Statex \quad Integrate $\bm{\beta}_z^{(t+1)}$ to form $\bm{\beta}^{(t+1)}$ 
    \State Update $\bm{\delta}^{(t+1)}$based on Eq. (\ref{delta_upd})
    \State Update $\bm{\gamma}^{(t+1)} \leftarrow \bm{\gamma}^{(t)} + \rho (\bm{\Omega}\bm{\beta}^{(t+1)}- \bm{\delta}^{(t+1)})$ 
    \EndWhile
    \end{algorithmic}
\end{algorithm}
\label{alg}
\end{spacing}
}

\subsection{Selection of tuning parameters}
\label{subsec:CV}

We select the tuning parameters $\lambda$, $\rho$, and $w_{dist}$ for the weight of the regularization term $w_{uv}$, using stratified $K$-fold cross-validation. Stratified sampling is employed so that the proportion of positive and negative cases is approximately maintained in each fold. Patients are randomly assigned to folds with approximately equal fold sizes. At each step, GrAUC-PFL is trained jointly across all institutions using the corresponding training subsets. 
The candidates of the tuning parameters are sent to the server and used in server-side calculations. Once the AUC has been calculated at each institution, the results are aggregated at the central server to calculate the overall mean. 
The combination of parameters that maximized the mean validation AUC across institutions is selected. The combination of tuning parameters that achieved the highest mean validation AUC across the $K$-folds is selected. 
As for the calculations of the mean AUC values for CV, the following methods are available: 
\begin{align*}
\mathrm{AUC_{CV}}
= 
\frac{1}{K}
\sum_{k =1}^{K}
\frac{1}{n_z m_z}
\sum_{z=1}^Z
\sum_{(i,j)\in C_k }
I\!\left(
\hat{\bm{\beta}}_{zk}^\top (\bm{x}^+_{iz} - \bm{x}^-_{jz}) > 0
\right)
\end{align*}
where $C_k$ denotes the Cartesian product of the set of positive samples and that of negative samples belonging to $k$th fold. $\hat{\bm{\beta}}_{zk}$ represents estimated $\hat{\bm{\beta}}_{z}$ corresponding to $k$th fold.

\section{Numerical simulation}\label{sec:simulation}

\subsection{Simulation design}
\label{subsec:sim_design}

We conduct four numerical simulations to evaluate the performance of GrAUC-PFL. In this section, we describe the simulation design based on \cite{chen2025auc}. The settings for each of the four simulations are as follows: Simulation 1 examines the case where a relationship exists between the structure of the mean vector and the geographical group structure. Simulation 2 involves a scenario where groups exist within the mean structure that are related to geographical coordinates, but where the sample size varies by institutions. In simulation 3, the institutions have a mean structure in group structure, but this is not related to their geographical coordinates. Simulation 4 examines all institutions sharing a single mean structure, and no correspondence exists between the geographical and mean structures. 

In all simulations, positive and negative labels are denoted by $y$, and data that differ for each class are generated. Then, we generate the coordinates based on the proximity of institutions. 

The common settings is explained first. Let the number of institutions be $Z=20 \ $ ($z=1,2, \dots, 20$). The outcome is defined as $y_{hz} = \{0, 1\} \ (h=1, \dots, N_z)$, where $y_{hz} = 1$ represents a positive sample and $y_{hz} = 0$ represents a negative one. For each institution, $n_z$ and $m_z$ denote the number of positive and negative samples, respectively, such that $N_z = n_z + m_z$. The prevalence rate was set to be the same across all institutions, with $\pi \ (= n_z/N_z) \in \{ 0.3, 0.25, 0.2, 0.15, 0.1, 0.05 \}$. 

Next, set the dimension of the explanatory variables at institution $z$ as $\bm{X}_z \in \mathbb{R}^{N_z\times p}$ to $p=18$, designating $3$ of these variables as active variables and the remaining $15$ as noise. Let $\bm{x}_{hz} \in \mathbb{R}^p$ denote the vector of explanatory variables for individual $h$ at institution $z$. 
For negative samples ($y_{hz}=0$), we assume that $\bm{x}_{jz}^- \sim N(\bm{0},\bm{I}_p)$, and for positive samples ($y_{hz}=1$), 
\begin{align}
\bm{x}_{iz}^+ \sim N(\bm{\mu}^*_{z}, \bm{I}_p)
\label{x_pos}
\end{align}
where $\bm{\mu}^*_{z} = (\mu^*_{z}, \mu^*_{z}, \mu^*_{z}, 0, \dots, 0)^\top \in \mathbb{R}^p$. The first three variables are informative variables, and the remaining are noise. The settings for $\bm{\mu}_z^*$ are explained in each simulation setting.

Now, we explain the setting of GrAUC-PFL. The weight for each edge $(u,v)$ is calculated based on Eq. (\ref{w_weight}). We use R package {\choosefont{pcr}geosphere} \citep{R_geosph} to compute pairwise distance from the coordinate data, and the MST is obtained using {\choosefont{pcr}igraph} package \citep{R_igraph}.  The constant value $q = 0.01$, and each simulation was repeated $100$ times. For the initial values of the parameters, the coefficient vector $\bm{\beta}$ was initialized by randomly sampling from a standard normal distribution for each institution $z$; $\bm{\beta}_z^{(0)} \sim N(\bm{0}, \bm{I}_p)$. 
The auxiliary variable $\bm{\delta}$ is initialized as $\bm{\delta}^{(0)} = \bm{\Omega} \bm{\beta}^{(0)}$ to ensure that the consistency condition is satisfied at initializationzw. 
The dual variable $\bm{\gamma}$ is initialized as the zero vector. 

For the compared methods, we adopted two approaches. The first one is a local approach that maximizes the AUC surrogate independently at each institution, which is named 'individual'.
The second method is an approach that directly maximizes the AUC within the framework of Federated Averaging (FedAvg) \citep{FedAvg}, which estimates a common coefficient vector using conventional FL. 

The evaluation metric was the AUC for each institution by using test data and calculating the mean value across institutions. 
For the AUC calculation for each institution, we used {\choosefont{pcr}roc} function of the {\choosefont{pcr}pROC} package in R \citep{R_pROC}.

\noindent
{\bf Simulation 1}

Simulation 1 examines cases where similarities exist in the structure of institutions based on geographical proximity. The number of samples at each institution $z$ in the training data is set to $N_{z, train} \in \{50, 300 \}$, and the number of samples in the test data is set to $N_{z,test} = 100$. The $20$ institutions are divided into two clusters ($z=1$ to $10$ and $z=11$ to $20$) consisting of geographically adjacent institutions. 
Assuming the center coordinates of each cluster are $\bm{c}_1=(0.1, 0.1)^\top$ and $\bm{c}_2=(0.9, 0.3)^\top$ respectively, the coordinates of the individual institutions are generated as follows:
\begin{align*}
(\mathrm{lon}_z, \mathrm{lat}_z) = \bm{c}_{g(z)} + \bm{\epsilon}_z, \quad \bm{\epsilon}_z \sim N(0, 0.03^2I_2).
\end{align*}
where $g(z) =1$ for $z \leq 10$ and $g(z) =2$ for $z \geq 11$. Here, $g(z)$ is a function allocating clustering number for institution $z$.

Next, for the mean of the covariates for the positive samples, we set $\bm{\mu}_z^*$ as $\bm{\mu}^*_{lg(z)} = (\mu^*_{lg(z)}, \mu^*_{lg(z)}, \mu^*_{lg(z)}, 0, \dots, 0)^\top \in \mathbb{R}^p$ depending on $l$ and $g(z)$. The signal is controlled by the mean shift $\mu^*_{lg(z)}$ in the first three variables, while the remaining variables act as noise. $\mu^*_{lg(z)}$ is set for each group $g(z) \in \{1,2\}$, and three different levels $l=\{1,2,3\}$. The pattern for $\mu^*_{lg(z)}$ is shown in Table \ref{mu_x}.
Based on the above settings, training and test datasets are independently generated from the same data-generating process with the same prevalence $\pi$ using Eq. (\ref{x_pos}). 
\begin{table}[H]
\centering
\caption{The pattern of $\mu^*_{lg(z)}$. $l$ represents the levels of the mean structure and $g(z)$ is the group of institutions with geographical similarity.}\label{mu_x}
\begin{tabular}{c|cc}
\hline
& \multicolumn{2}{c}{group $g(z)$} \\
\hline
level $l$ & $1$ & $2$ \\
\hline
$1$ & $0.9$ & $-0.9$ \\
$2$ & $0.7$ & $-0.7$ \\
$3$ & $0.5$ & $-0.5$ \\
\hline
\end{tabular}
\end{table}
The tuning parameters $\lambda$ and $\rho$, and $w_{dist}$ are determined using $5$-fold cross-validation applied to the entire PFL model. \\

\noindent
{\bf Simulation 2}

Simulation 2 examines the situations where the number of patients in training data varies by institutions. The mean structure and the geographical coordinates are identical to those in simulation 1. 
To reflect heterogeneous sample sizes across institutions, the training sample size for each institution $N_{z, train}$ is randomly assigned from $\{30,50,100,200,300\}$ with equal probability and is fixed throughout each simulation replicate. The test sample size is set as $N_{z, test}=200$. As the minimum sample size of the training data is small, a 3-fold cross-validation is performed to generate positive/negative pairs for each fold. Furthermore, the sample sizes for positive and negative cases are calculated principally based on prevalence; however, for institutions with small sample sizes, the numbers are adjusted to ensure that at least four cases are allocated to each category. Other settings are same as those in simulation 1.

\noindent
{\bf Simulation 3}

As with simulation 1, the data in simulation 3 possesses a group structure based on the mean values of the covariates for positive samples; however, geographical coordinates are assumed independent of this mean-value structure.

Now we explain the settings. For each institution $(z=1,2,\dots,20)$, its longitude and latitude are independently generated as
\begin{align}
\mathrm{lon}_z \sim \operatorname{Unif}(130, 145), \quad
\mathrm{lat}_z \sim \operatorname{Unif}(31, 45).
\label{coords_sim3}
\end{align}
Thus, institutional locations are distributed randomly over a predefined rectangular geographical region, without imposing geographic group structure. The other settings including the mean structure of the covariates are identical to those in simulation 1.

\noindent
{\bf Simulation 4}

We examine cases where the mean structure is not related to geographical coordinates, for which institutions are generated in the same manner as Eq. (\ref{coords_sim3}) in simulation 3. To introduce continuous heterogeneity across institutions without imposing a group structure, institution-specific mean parameters are generated for each institution. Specifically, the mean vector for the positive class in the institution $z$ is then defined as
\begin{align*}
\bm{\mu}_z^*=(\mu^*_z,\mu^*_z,\mu^*_z,0,\ldots,0)^\top \in \mathbb{R}^p, \ \mu^*_z \sim \operatorname{Unif}(0.5, 0.9).
\end{align*}
Using this, the covariates for positive samples are generated based on Eq. (\ref{x_pos}).
All other simulation settings, including the number of samples for traning and test data, the generation of predictor variables and evaluation procedure were identical to those used in Simulation 1.

\subsection{Simulation results}\label{subsec:sim_results}

\begin{figure}[H] 
\begin{center}
\includegraphics[scale=0.47]{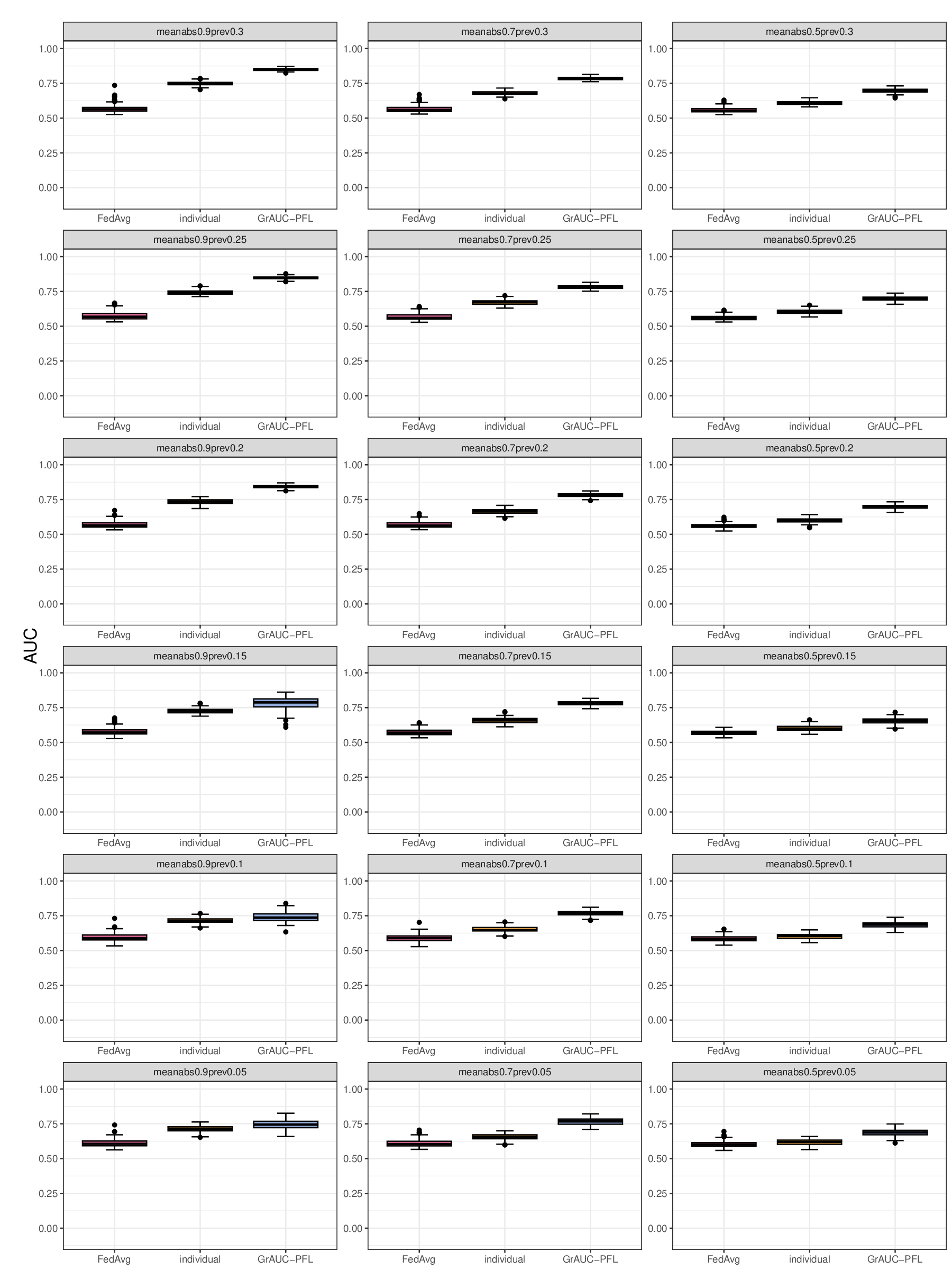}
\caption{The results of simulation 1 in $N_{z,train}=50$. The vertical axis shows the mean AUC as assessed by the institution, whilst the horizontal axis represents the method. 
For each plot, 'meanabs' mentions the mean structure of $\mu^*_{lg(z)}$ at Table \ref{mu_x} and 'prev' denotes the prevalence rate. 
The methods are as follows: 'GrAUC-PFL' refers to the proposed method, 'FedAvg' refers to the Federated Average, and 'individual”' refers to the method in which models are built for each institution and their AUC are evaluated separately.}
\label{sim_n50}
\end{center}
\end{figure}

\begin{figure}[H]
\begin{center}
\includegraphics[scale=0.47]{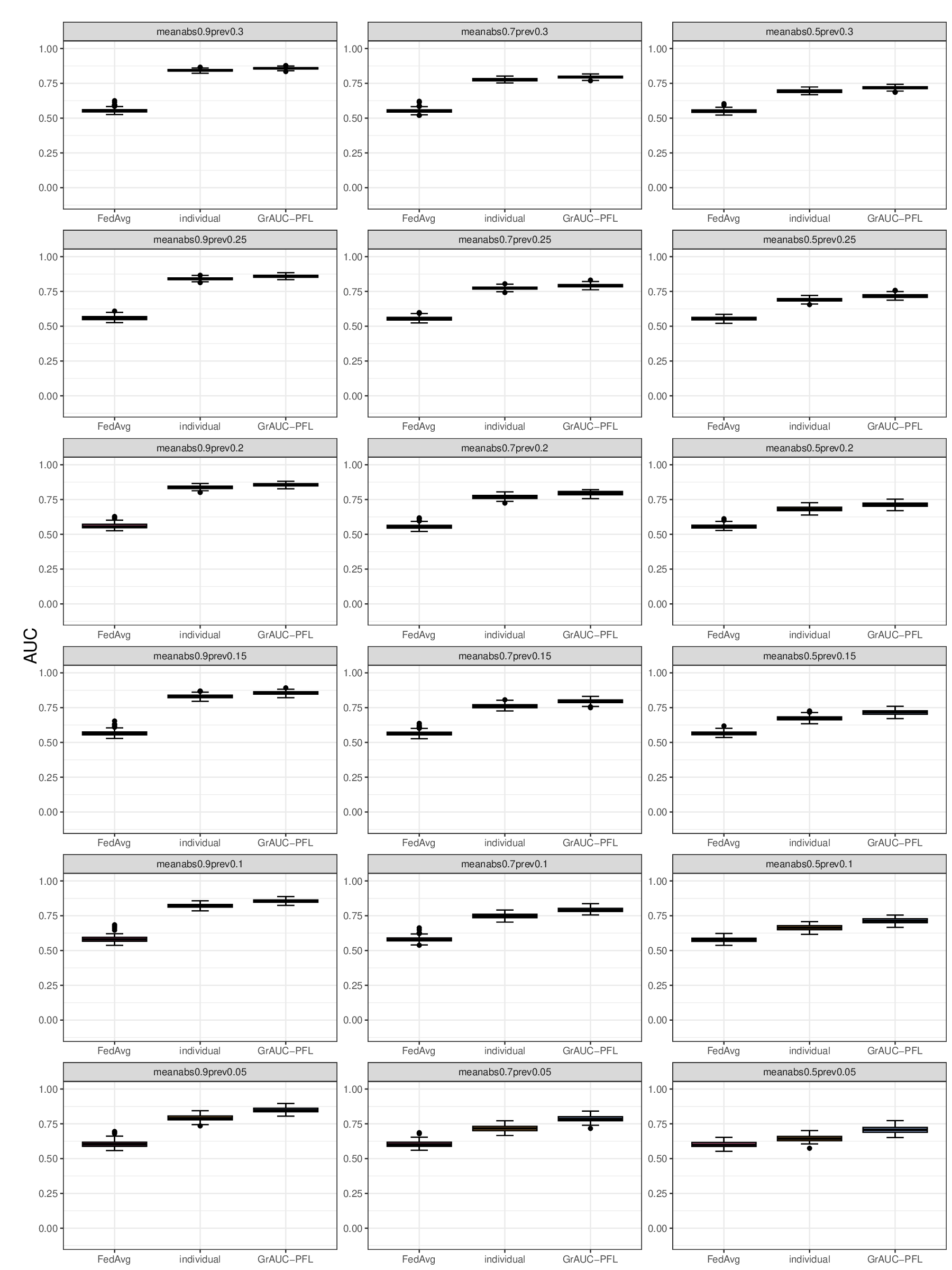}
\caption{The results of simulation 1 in $N_{z,train}=300$. }
\label{sim_n300}
\end{center}
\end{figure}

The results of the numerical simulation 1 are shown in Figure \ref{sim_n50} for $N_{z,train}=50$ and Figure \ref{sim_n300} for $N_{z,train}=300$. GrAUC-PFL achieved the highest mean AUC across all patterns for both $N_{z, train}=50$ and $300$. Its performance improved as the difference in $\mu^*_{lg(z)}$ increased, whereas FedAvg showed no notable change. Prevalence had relatively little effect on performance. For $N_{z, train}=300$, compared with the results for $N_{z, train}=50$, the AUC of the individual method was close to that of GrAUC-PFL, especially when the prevalence value was large. 
In simulation 2, GrAUC-PFL achieved better results in almost all patterns, shown in Figure \ref{sim2} in Appendices \ref{secC}. Similar to the results of simulation 1, the higher the signal and the higher prevalence, the greater the estimation accuracy. By contrast, FedAvg consistently exhibited the lowest performance. 
In simulation 3, shown  in Figure \ref{sim3_50} for $N_{z,train}=50$ and Figure \ref{sim3_300} for $N_{z,train}=300$ in Appendices \ref{secC}, the results of GrAUC-PFL were slightly better than those of the individual AUC-maximization method in $\bm{\mu}^*_{lg(z)}$ were higher. In the other scenarios, the individual method yielded good results, although the difference in values compared to GrAUC-PFL was not particularly large.  
In simulation 4, whose results are described in Figure \ref{sim4} in Appendices \ref{secC}, the results of GrAUC-PFL were almost the same as those of FedAvg when the prevalence was $0.3$ and $0.25$ for $N_{z,train}=50$. Although FedAvg was superior to GrAUC-PFL for the remaining patterns, the difference between them was slight.

\section{Real data application} \label{sec:realdata}

To evaluate the practical usefulness of GrAUC-PFL, we applied it to a real-world dataset. In this section, we first describe the characteristic of the dataset and preprocessing procedures, and then present the predictive performance compared to other methods and estimated parameter patterns obtained by the proposed.

\subsection{Data description and Experimental Setting} \label{subsec:data}

To evaluate the practical usefulness of GrAUC-PFL, we apply it to a real-world dataset. First, we explain the dataset for the application. The real-data application is conducted using the publicly available COVID-19 Case Surveillance Public Use Data with Geography, which contains nationwide surveillance records collected in the United States provided by the Centers for Disease Control and Prevention (CDC) \citep{cdc_covid_geo}. This dataset is collected from the beginning of $2020$ and is accessed on $2$ February $2026$. This study uses publicly available de-identified data; therefore, institutional review board approval and informed consent are not required. Owing to computational constraints, a subset of $100,000$ records from the original dataset is retrieved for the analysis. Records containing missing, unknown, or unusable variables values are excluded, resulting in $11,628$ complete cases for analysis, and the outcome is hospitalization. Six covariates are selected after excluding records with missing or unknown values. The details of these variables are listed in Table \ref{covariates}. 

These data include state and county names and the corresponding geographic codes. Counties are treated as individual units, and geographical proximity is considered. To calculate pairwise geographic distances, the geographic coordinates for each county are derived from county boundary shapefiles obtained from the U.S. Census Bureau's Topologically Integrated Geographic Encoding and Referencing (TIGER)/Line shapefiles using the R package {\choosefont{pcr}tigris} \citep{tigris}. To calculate the pairwise distance, we use the R package {\choosefont{pcr}geosphere} and the {\choosefont{pcr}igraph} to construct a graph of the MST. The analysis data comprise $18$ counties, arranged as shown in Figure \ref{us_map}. There are some neighboring counties on the west and east coasts. Table \ref{data_county} lists the patient data by county. Prevalence was calculated as the proportion of hospitalization in each county. The number of hospitalizations in some counties is small and the prevalence in many counties is low. 

Next, we explain the procedure used. For each county, we construct a training dataset by randomly sampling hospitalized and non-hospitalized patients separately, and use the remaining observations as the test dataset. First, we partition the data within each county into training and test datasets using outcome-stratified sampling. Approximately $20$\% of samples from each outcome class are allocated to the training dataset, whereas the remaining samples are assigned to the test dataset. For counties with relatively small numbers of inpatients, the allocation is adjusted to ensure that both outcome classes are represented in the training and test datasets. The tuning parameter $\lambda$, $\rho$, and $w_{dist}$ are determined via stratified $3$-fold cross-validation using the training dataset. 

The evaluation index is the mean AUC of each county. To calculate the AUC for each county, the R package {\choosefont{pcr}pROC} is used. The compared methods are FedAvg and a localized approach that individually maximize alternative measures of AUC in each county, similar to the numerical simulation.
\begin{table}[h]
\centering
\caption{Data summary by county.}
\label{covariates}
\begin{tabular}{c|l}
\hline
Variable & Categories\\
\hline
age\_group & 0--17, 18--49, 50--64, $\geq$65 yrs\\
sex & Female, Male\\
race & American Indian, Alaska Native, Asian,
Black, \\
& Multiple/Other, Native Hawaiian, Other Pacific Islander, White\\
ethnicity & Hispanic, Non-Hispanic\\
current\_status & Laboratory-confirmed case, Probable case
\\
symptom\_status & Asymptomatic, Symptomatic \\
\hline
\end{tabular}
\end{table}
%
\begin{figure}[H]
\begin{center}
\includegraphics[scale=0.7]{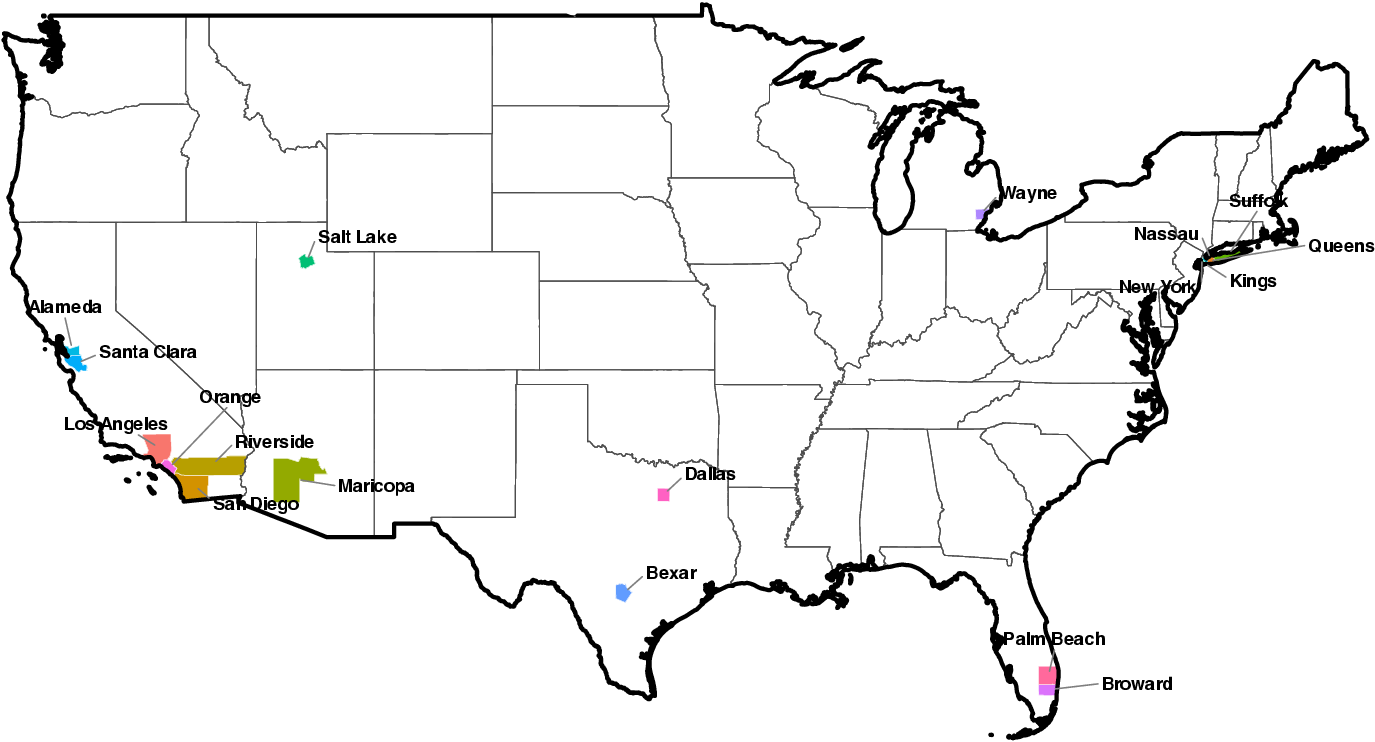}
\caption{Geographic distribution across the contiguous United States, where Alaska, Hawaii, and U.S. territories are excluded from the map visualization.}
\label{us_map} 
\end{center}
\end{figure}

\begin{table}[h]
\centering
\caption{Data distribution by county.}
\scalebox{0.8}{
\begin{tabular}{ccrrr|ccrrr}
\hline
State & County & Total $N_z$ & Hospitalization  & Prevalence & State &County & Total $N_z$ & Hospitalization & Prevalence\\
\hline
UT & Salt Lake & $1,935$ & $33$ & $0.017$ & NY & Kings &  $212$ &  $189$ &$0.892$ \\
FL & Palm Beack & $888$ & $32$ & $0.036$ & NY & Queen & $131$ & $111$ & $0.847$\\
FL & Broward & $1,282$ & $37$ & $0.029$ & TX & Dallas & $195$ & $20$ & $0.103$\\
TX & Bexar & $849$ & $37$ & $0.044$ & NY & Nassau & $166$ & $4$ & $0.024$\\
AZ & Maricopa & $1,107$ & $256$ & $0.231$ & NY & Suffolk & $1,212$ & $39$ & $0.032$\\
CA & Riverside & $522$ & $14$ & $0.027$ & CA & Santa Clara & $274$ & $9$ & $0.033$\\
MI & Wayne & $603$ & $42$ & $0.070$ & CA & Alameda & $524$ & $79$ & $0.151$\\
CA & Los Angels & $1,006$ & $22$ & $0.022$ & NY & New York & $34$ & $27$ & $0.794$\\
CA & San Diego & $578$ & $57$ & $0.099$ & CA & Orange & $110$ & $10$ & $0.091$\\
\hline
\label{data_county}
\end{tabular}
}
\end{table}

\subsection{Results} \label{subsec:real_result}

The results of the mean AUC of each county were shown in Table \ref{result_realdata}. GrAUC-PFL was superior to the compared methods. Table \ref{result_county} describes the AUC by county. GrAUC-PFL showed better results in half of the counties. For Los Angeles, Santa Clara, and Riverside, the AUC of the individual AUC maximisation method was lower, whereas the GrAUC-PFL and FedAvg methods were superior for those counties. However, the results of FedAvg in Dallas, Orange, Nassau and New York were rather better than those of the other methods. 
Next, Figure \ref{beta_heatmap} shows the heatmap of $\hat{\bm{\beta}}$ of GrAUC-PFL. The dendrogram confirmed that the data had been successfully clustered into the following groups: East Coast (Nassau, Suffolk, New York, Kings, and Queens), West Coast (Riverside, Maricopa, San Diego, Los Angeles, and Orange), and Florida (Broward and Palm Beach), along with adjacent regions within the same state. 
\begin{table}[h]
\centering
\caption{Results of mean AUC applying the real-world data.}
\begin{tabular}{c|ccc}
\hline
 & GrAUC-PFL & FedAvg & individual \\
\hline
AUC & $\bm{0.651}$ & $0.628$ & $0.633$ \\
\hline
\end{tabular}
\label{result_realdata}
\end{table}
\begin{table}[h]
\centering
\caption{Results of AUC by each county.}
\scalebox{0.8}{
\begin{tabular}{crrr|crrr}
\hline
County & GrAUC-PFL & FedAvg & individual & County & GrAUC-PFL & FedAvg & individual\\
\hline
Salt Lake & $0.659$ & $0.409$ & $0.689$ & Kings & $0.792$ & $0.781$ & $0.865$\\
Palm Beach & $0.469$ & $0.493$ & $0.452$ & Queens & $0.644$ & $0.439$ & $0.651$ \\
Broward & $0.573$ & $0.434$ & $0.573$ & Dallas & $0.752$ & $0.837$ & $0.776$ \\
Bexar & $0.703$ & $0.544$ & $0.694$ & Nassau & $0.269$ & $0.438$ & $0.269$ \\
Maricopa & $0.719$ & $0.711$ & $0.725$ & Suffolk & $0.742$ & $0.742$ &$0.742$ \\
Riverside & $0.759$ & $0.758$ & $0.441$ & Santa Clara & $0.594$ & $0.520$ & $0.397$\\
Wayne & $0.814$ & $0.777$ & $0.815$ & Alameda & $0.773$ & $0.721$ & $0.775$ \\
Los Angels & $0.581$ & $0.488$ & $0.485$ &New York & $0.786$ & $0.950$ & $0.595$ \\
San Diego & $0.627$ & $0.388$ & $0.626$ & Orange & $0.455$ & $0.870$ & $0.818$\\
\hline
\end{tabular}
}
\label{result_county}
\end{table}
%
\begin{figure}[H]
\begin{center}
\includegraphics[scale=0.8]{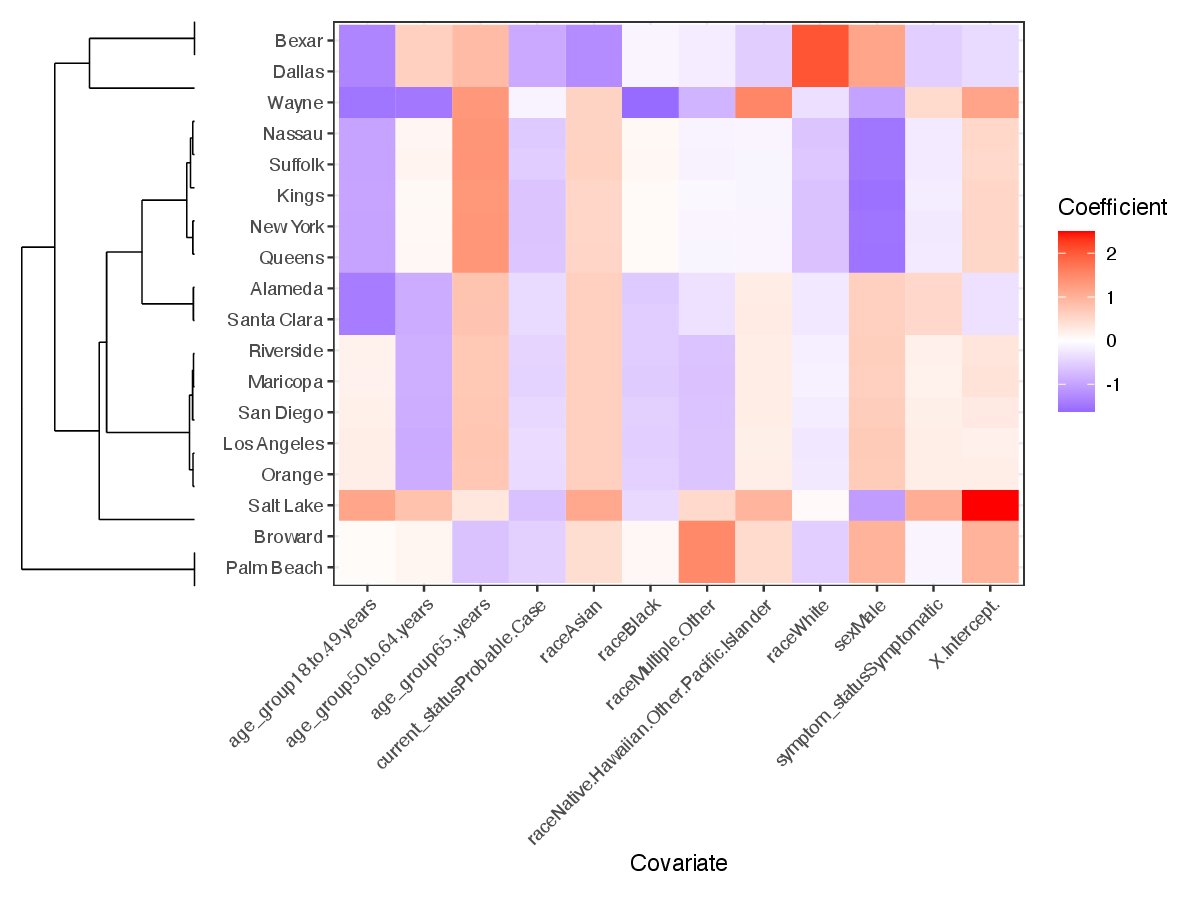}
\caption{Heatmap of $\hat{\bm{\beta}}$ of GrAUC-PFL.}
\label{beta_heatmap} 
\end{center}
\end{figure}

\section{Discussion}\label{sec:discussion}

In numerical simulations, we examined the performance in four setting scenarios compared to other AUC maximization methods. In simulations 1 and 2, where geographically neighboring institutions shared similar underlying data distributions, GrAUC-PFL demonstrated superior performance compared to the other methods. Simulation 2 further demonstrated that this advantage was maintained, even when training sample sizes varied across institutions, reflecting common features of multicenter studies. In contrast, the individual AUC maximization yielded lower AUC values, suggesting that relying solely on local data may be insufficient when sample size are limited. FedAvg also showed limited performance even when $N_{z,\mathrm{train}}=300$, indicating that a single global model may fail to capture institutional heterogeneity. These findings suggest that GrAUC-PFL can benefit institutions with limited local data by utilizing information from geographically similar institutions while retaining institution-specific model. Simulations 3 and 4 further examined scenarios in which geographical proximity did not reflect similarities in the data distribution. Under these conditions, GrAUC-PFL performed comparable to individual method, without significant performance degradation. These findings suggest that the benefit of spatial regularization depends on whether geographical proximity reflects data similarity, while its misspecification did not result in substantial performance loss in our simulations.

The real-world data application to COVID-19 data further demonstrated potential benefit of the GrAUC-PFL, which achieved the highest mean AUC across counties. Across the counties, GrAUC-PFL outperformed the compared methods in Los Angeles, Santa Clara, and Riverside, where disease prevalence was relatively low. These results suggest that borrowing information from geographically related counties can complement limited local information. In counties where individual models performed best, GrAUC-PFL generally achieved comparable AUC values, suggesting that geographic information sharing did not compromise local predictive performance. However, FedAvg outperformed GrAUC-PFL in some counties such as Nassau and Orange. The coefficient heatmap showed that GrAUC-PFL estimated similar coefficient patterns among geographically neighboring counties, consistent with the intended effect of geographical regularization. This suggests that, in these counties, the overall trends common to all counties may have provided more useful information than local adaptations based on geographical proximity. 

\section{Conclusion}\label{sec:conclusion}

In this study, we proposed GrAUC-PFL, a personalized federated learning framework that maximizes AUC directly, while incorporating geographic similarity among institutions. GrAUC-PFL enables building personalized models, while leveraging information from geographically related institutions without disclosing data externally. Numerical simulations and its application to real-world data demonstrated its effectiveness in improving discriminative performance.

For limitation of GrAUC-PFL is that its assumption that geographically proximate institutions tend to share similar model parameters, which may not always hold in practice. Future work could extend the framework to alternative measure of similarity, such as similarities in local models or data distributions \citep[e.g.][]{wang2023, FedAMP}. In addition, although we adopted a logistic-based surrogate loss for AUC maximization, investigating alternative surrogate losses and their effects on predictive performance and computational efficiency remains an important direction for future research.

\bibliographystyle{unsrtnat}
\bibliography{references}  

\clearpage

\appendix
\appendixpage
\section{Proof of Lemma 1}\label{secA1}

\begin{proof} 
The first term of the logistic surrogate function at institution $z$ is 
\begin{align}
f_z(\bm{\beta}_z)
= 
\frac{1}{n_z m_z} \sum_{i=1}^{n_z} \sum_{j=1}^{m_z} 
\log \big(1+\exp[-(\bm{\beta}_z^\top \bm{x}_{iz}^{+} - \bm{\beta}_z^\top \bm{x}_{jz}^{-} )-q] \big)
\label{log_sur}
\end{align}
Here, we denote $\bm{d}_{ijz} = \bm{x}_{iz}^+ - \bm{x}_{jz}^-$. 
Eq. (\ref{log_sur}) at a pair can be expressed as 
\begin{align}
\ell_{ijz}(\bm{\beta}_z)
= 
\log \big(1+\exp[-(\bm{\beta}_z^\top \bm{d}_{ijz})-q] \big)
\label{log_sur_pr}
\end{align}
The gradient and Hessian matrix of Eq. (\ref{log_sur_pr}) are given as follows:
\begin{align*}
\nabla \ell_{ijz}(\bm{\beta}_z)
&= 
-
\sigma(-\bm{\beta}_z^\top \bm{d}_{ijz}-q)\bm{d}_{ijz},  \\
\nabla^2 \ell_{ijz}(\bm{\beta}_z)
&= \sigma(-\bm{\beta}_z^\top \bm{d}_{ijz}-q)\{1 - \sigma(-\bm{\beta}_z^\top \bm{d}_{ijz}-q) \}
\bm{d}_{ijz} \bm{d}_{ijz} ^\top.
\end{align*}
where $\sigma(\cdot)$ is a sigmoid function.
Let $u_{ijz}=-\bm{\beta}_z^\top \bm{d}_{ijz} - q$.
Since $\sigma(u_{ijz}) \in (0,1)$, it follows that
\begin{align*}
\sigma(u_{ijz})\{1-\sigma(u_{ijz})\} \leq \frac{1}{4},
\end{align*}
Then, since $\bm{d}_{ijz}\bm{d}_{ijz}^\top$ is positive semidefinite, for each pair,
\begin{align*}
\sigma(u_{ijz})\{1-\sigma(u_{ijz})\} \bm{d}_{ijz}\bm{d}_{ijz}^\top
\preceq
\frac{1}{4}\bm{d}_{ijz}\bm{d}_{ijz}^\top.
\end{align*}
where $\preceq$ denotes the positive semidefinite ordering.
Averaging over all pairs yields
\begin{align}
\nabla^2 f_z(\bm{\beta}_z)
&= \frac{1}{n_zm_z} \sum_{i=1}^{n_z}
\sum_{j=1}^{m_z}
\sigma(u_{ijz})\{1-\sigma(u_{ijz})\} \bm{d}_{ijz}\bm{d}_{ijz}^\top \nonumber\\
& \preceq \frac{1}{n_zm_z} \sum_{i=1}^{n_z}
\sum_{j=1}^{m_z} \frac{1}{4}\bm{d}_{ijz}\bm{d}_{ijz}^\top
 \nonumber\\
&=
\frac{1}{4n_zm_z}
\sum_{i=1}^{n_z}
\sum_{j=1}^{m_z}
\bm{d}_{ijz}\bm{d}_{ijz}^{\top}
\label{los_lips}
\end{align}
Here, we set
\begin{align*}
\mu_z
=
\varsigma_{\max} \bigg(
\frac{1}{4n_zm_z}
\sum_{i=1}^{n_z}
\sum_{j=1}^{m_z}
\bm{d}_{ijz}\bm{d}_{ijz}^{\top} \bigg), 
\end{align*}
where $\varsigma_{max}(\bm{O})$ denotes the largest eigenvalue of $\bm{O}$. 
Hence, the Hessian is uniformly bounded. 
Let $\bm{Q}_z = \frac{1}{4n_zm_z}\sum_{i=1}^{n_z}
\sum_{j=1}^{m_z}
\bm{d}_{ijz}\bm{d}_{ijz}^{\top}$,  for any $\bm{g} \in \mathbb{R}^p$, it holds
\begin{align*}
\bm{g}^\top \nabla^2 f_z(\bm{\beta}_z) \bm{g} 
\leq
\bm{g}^\top \bm{Q}_z \bm{g}
\leq 
\varsigma_{\max} (\bm{Q}_z) \|\bm{g} \|_2^2,
\end{align*}
where $\| \cdot \|_2$ is Euclidean norm.
Then, 
\begin{align*}
\bm{g}^\top \nabla^2 f_z(\bm{\beta}_z) \bm{g} 
&\leq
\varsigma_{\max} (\bm{Q}_z)\| \bm{g} \|_2\\
&=
 \bm{g}^\top\big(\varsigma_{\max} (\bm{Q}_z)\bm{I}\big) \bm{g}\\
\nabla^2 f_z(\bm{\beta}_z)
&\preceq 
\varsigma_{\max} (\bm{Q}_z) \bm{I}
\end{align*}
From Eq. (\ref{los_lips}), $\varsigma_{\max} (\bm{Q}_z) \bm{I} - \nabla^2 f_z(\bm{\beta}_z)$ is positive semidefinite. Therefore, for all eigenvalue of $\nabla^2 f_z(\bm{\beta}_z)$, it holds
\begin{align*}
\varsigma_k (\nabla^2 f_z(\bm{\beta}_z))
&\leq 
\varsigma_{\max} (\bm{Q}_z)
\end{align*}
%
For the largest eigenvalue, it also holds:
\begin{align}
\varsigma_{\max} (\nabla^2 f_z(\bm{\beta}_z))
\leq 
\varsigma_{\max} (\bm{Q}_z).
\label{eigenmax_ineq}
\end{align}
Here, $\nabla^2 f_z(\bm{\beta}_z)$ is an symmetric positive-semidefinite matrix, 
\begin{align*}
\varsigma_{\max} (\nabla^2 f_z(\bm{\beta}_z))
= \| \nabla^2 f_z(\bm{\beta}_z)\|_2. 
\end{align*}
Then, Eq. (\ref{eigenmax_ineq}) can be expressed as 
\begin{align}
\| \nabla^2 f_z(\bm{\beta}_z)\|_2
\leq 
\varsigma_{\max} (\bm{Q}_z) = \mu_z.
\label{mu_z_hessian}
\end{align}
Next,  for any $\bm{\beta}_{z}, \bm{\beta}_z^\prime \in \mathbb{R}^p$, by the mean-value theorem,
\begin{align*}
\nabla f_z(\bm{\beta}_{z}) - \nabla f_z(\bm{\beta}_{z}^\prime)
=
\int_0^1 \nabla^2 f_z
\big(
\bm{\beta}_{z}^\prime + c(\bm{\beta}_{z}-\bm{\beta}_{z}^\prime)
\big)
(\bm{\beta}_{z}-\bm{\beta}_{z}^\prime)dc
\end{align*}
Take the norm 
\begin{align}
\|\nabla f_z(\bm{\beta}_{z})-\nabla f_z(\bm{\beta}_{z}^\prime)\|_2
&\leq 
\int_0^1 
\|\nabla^2 
f_z(\bm{\beta}_{z}^\prime+c(\bm{\beta}_{z}-\bm{\beta}_{z}^\prime))\|_2
\|\bm{\beta}_{z}-\bm{\beta}_{z}^\prime\|_2dc \nonumber\\
\Longrightarrow \|\nabla f_z(\bm{\beta}_{z})-\nabla f_z(\bm{\beta}_{z}^\prime)\|_2
& \leq 
\bigg(
\int_0^1 
\|\nabla^2 
f_z(\bm{\beta}_{z}^\prime +c(\bm{\beta}_{z}-\bm{\beta}_{z}^\prime))\|_2
dc \bigg)
\|\bm{\beta}_z-\bm{\beta}_z^\prime\|_2
\label{norm_mean_theo}
\end{align}
From Eq. (\ref{mu_z_hessian}), 
\begin{align*}
\int_0^1 
\|\nabla^2 
f_z(\bm{\beta}_{z}^\prime+c(\bm{\beta}_{z}-\bm{\beta}_{z}^\prime))\|_2
dc
\leq
\int_0^1 \mu_z dc = \mu_z,
\end{align*}
Then, Eq. (\ref{norm_mean_theo}) can be expressed as
\begin{align}
\|\nabla f_z(\bm{\beta}_{z})-\nabla f_z(\bm{\beta}_{z}^\prime)\|_2
\leq 
\mu_z
\|\bm{\beta}_z-\bm{\beta}_z^\prime\|_2
\label{lips_cont_z}
\end{align}
Therefore, the gradient at institution $z$ is $\mu_z$ Lipschitz continuous. 
For $\bm{\beta}$, 
\begin{align*}
f(\bm{\beta}) = \frac{1}{Z}\sum_{z=1}^Z f_z(\bm{\beta}_z).
\end{align*}
Then, 
\begin{align*}
\nabla f(\bm{\beta}) = \frac{1}{Z}
\nabla f_z(\bm{\beta}_z), \ (z=1, 2,\cdots, Z)
\end{align*}
For $\bm{\beta}, \bm{\beta}^\prime$,
\begin{align*}
\|\nabla f(\bm{\beta})-\nabla f(\bm{\beta}^\prime)\|_2^2
=
\frac{1}{Z^2}
\sum_{z=1}^Z
\|\nabla f(\bm{\beta}_z) - \nabla f(\bm{\beta}_z^\prime)\|_2^2
\end{align*}
%
Therefore, with Eq. (\ref{lips_cont_z}), we have, 
\begin{align*}
\|\nabla f(\bm{\beta})-\nabla f(\bm{\beta}^\prime)\|_2^2
&=
\frac{1}{Z^2}
\sum_{z=1}^Z
\|\nabla f(\bm{\beta}_z) - \nabla f(\bm{\beta}_z^\prime)\|_2^2\\
&\leq 
\frac{1}{Z^2}
\sum_{z=1}^Z
\mu_z^2
\|\bm{\beta}_z-\bm{\beta}_z^\prime\|_2^2\\
&\leq\frac{\mu^2_{\max}}{Z^2}
\sum_{z=1}^Z
\|\bm{\beta}_z-\bm{\beta}_z^\prime\|_2^2\\
&=
\frac{\mu^2_{\max}}{Z^2}\|\bm{\beta}-\bm{\beta}^\prime\|_2^2.
\end{align*}
where $\mu_{\max} = \max_{1\leq z \leq Z} \mu_z$. 
From above, 
\begin{align*}
\|\nabla f(\bm{\beta})-\nabla f(\bm{\beta}^\prime)\|_2
\leq
\frac{\mu_{\max}}{Z}\|\bm{\beta}-\bm{\beta}^\prime\|_2
\leq \mu_{\max}\|\bm{\beta}-\bm{\beta}^\prime\|_2.
\end{align*}
Thus, we have $L=\max_z(\mu_z)$.
\end{proof}

\newpage

\section{Proof of Theorem 2}\label{secB}

\begin{proof} 
We prove that the following Eq. (\ref{perfl_lemma1}) is the majorizing function of GrAUC-PFL to update $\bm{\beta}$, and demonstrate that Theorem 2 holds when $r> \rho \tau \varsigma_{\max}(\bm{\Omega}^\top\bm{\Omega})+\max( \frac{\tau\mu}{2}, 1)$, which is

\begin{align*}
\mathcal{L}_\rho (\bm{\beta}^{(t+1)}, \bm{\delta}^{(t)}, \bm{\gamma}^{(t)})
- \mathcal{L}_\rho (\bm{\beta}^{(t)}, \bm{\delta}^{(t)}, \bm{\gamma}^{(t)}) 
\leq 
- \bigg[
\frac{\varsigma_{\min}(\bm{H})}{\tau}
+ \frac{\rho \cdot \varsigma_{\min}(\bm{\Omega}^\top \bm{\Omega})}{2}
- \frac{\mu}{2}
\bigg]
\| \bm{\beta}^{(t+1)}- \bm{\beta}^{(t)}\|_2^2.
\end{align*}
First of all, differentiate Eq. (\ref{obj_beta_mm1}) with respect to $\bm{\beta}$ and set the result equal to zero.
\begin{align}
\nabla \tilde{\mathcal{L}}_\rho(\bm{\beta}; \bm{\beta}^{(t)}, \bm{\delta}^{(t)} ,\bm{\gamma}^{(t)} )
=
\nabla f(\bm{\beta}^{(t)}) 
+ \frac{1}{ \tau}{\bm{H}}(\bm{\beta} - \bm{\beta}^{(t)})
+ \bm{\Omega}^\top\bm{\gamma}^{(t)}
+ \rho \bm{\Omega}^\top(\bm{\Omega}\bm{\beta}- \bm{\delta}^{(t)})=\bm{0}.
\label{lag_def}
\end{align}
Substitute $\bm{\beta}^{(t+1)}$ into $\bm{\beta}$ in Eq. (\ref{lag_def})
\begin{align}
\nabla f(\bm{\beta}^{(t)}) 
- \frac{1}{\tau}{\bm{H}}(\bm{\beta}^{(t)} - \bm{\beta}^{(t+1)})
+ \bm{\Omega}^\top\bm{\gamma}^{(t)}
+ \rho \bm{\Omega}^\top(\bm{\Omega}\bm{\beta}^{(t+1)}- \bm{\delta}^{(t)})=\bm{0}.
\label{lag_def2}
\end{align}
Apply $(\bm{\beta}^{(t)} - \bm{\beta}^{(t+1)})^\top$ to Eq. (\ref{lag_def2}) from the left:
\begin{align}
0
=
(\bm{\beta}^{(t)} - \bm{\beta}^{(t+1)})^\top
\bigg[
\nabla f(\bm{\beta}^{(t)}) 
- \frac{1}{\tau}{\bm{H}}(\bm{\beta}^{(t)} - \bm{\beta}^{(t+1)})
+ \bm{\Omega}^\top\bm{\gamma}^{(t)}
+ \rho \bm{\Omega}^\top(\bm{\Omega}\bm{\beta}^{(t+1)}- \bm{\delta}^{(t)})
\bigg].
\label{lag_def3}
\end{align}
From Lemma 1, if the smooth function $f:\mathbb{R}^p \mapsto \mathbb{R}$ has Lipschitz continuous gradient, the following inequality holds: 
\begin{align}
f(\bm{\beta}^{(t+1)}) 
&\leq 
f(\bm{\beta}^{(t)})+ \nabla f(\bm{\beta}^{(t)})^\top (\bm{\beta}^{(t+1)} - \bm{\beta}^{(t)}) + \frac{\mu}{2}\|\bm{\beta}^{(t+1)} - \bm{\beta}^{(t)} \|^2_2 \nonumber\\
f(\bm{\beta}^{(t)}) - f(\bm{\beta}^{(t+1)}) 
& \geq 
- \nabla f(\bm{\beta}^{(t)})^\top (\bm{\beta}^{(t+1)} - \bm{\beta}^{(t)}) - \frac{\mu}{2}\|\bm{\beta}^{(t+1)} - \bm{\beta}^{(t)} \|^2_2 \nonumber\\
\nabla f(\bm{\beta}^{(t)})^\top (\bm{\beta}^{(t)} - \bm{\beta}^{(t+1)})
& \leq 
f(\bm{\beta}^{(t)}) - f(\bm{\beta}^{(t+1)})
+\frac{\mu}{2}\|\bm{\beta}^{(t+1)} - \bm{\beta}^{(t)} \|^2_2 
\label{desc_lemma}
\end{align}
With Eq. (\ref{desc_lemma}), Eq. (\ref{lag_def3}) can be derived the following inequality:
\begin{align}
\bm{0}
&=
(\bm{\beta}^{(t)} - \bm{\beta}^{(t+1)})^\top
\bigg[
\nabla f(\bm{\beta}^{(t)}) 
- \frac{1}{\tau}{\bm{H}}(\bm{\beta}^{(t)} - \bm{\beta}^{(t+1)})
+ \bm{\Omega}^\top\bm{\gamma}^{(t)}
+ \rho \bm{\Omega}^\top(\bm{\Omega}\bm{\beta}^{(t+1)}- \bm{\delta}^{(t)})
\bigg] \nonumber\\
&\leq
f(\bm{\beta}^{(t)}) - f(\bm{\beta}^{(t+1)})
+\frac{\mu}{2}\|\bm{\beta}^{(t)} - \bm{\beta}^{(t+1)} \|^2_2 
- \frac{1}{\tau} \| \bm{\beta}^{(t)} - \bm{\beta}^{(t+1)}\|^2_{\bm{H}} 
\nonumber \\
& \qquad \qquad 
+ (\bm{\beta}^{(t)} - \bm{\beta}^{(t+1)})^\top \bm{\Omega}^\top \bm{\gamma}^{(t)}
+ \rho (\bm{\beta}^{(t)} - \bm{\beta}^{(t+1)})^\top \bm{\Omega}^\top(\bm{\Omega}\bm{\beta}^{(t+1)} - \bm{\delta}^{(t)}) \nonumber\\
&=
f(\bm{\beta}^{(t)}) - f(\bm{\beta}^{(t+1)})
+\frac{\mu}{2}\|\bm{\beta}^{(t)} - \bm{\beta}^{(t+1)} \|^2_2 
- \frac{1}{\tau} \| \bm{\beta}^{(t)} - \bm{\beta}^{(t+1)}\|^2_{\bm{H}} \nonumber \\
& \qquad \qquad 
+ \bm{\gamma}^{(t)\top}(\bm{\Omega}\bm{\beta}^{(t)} -  \bm{\Omega}\bm{\beta}^{(t+1)})^\top
+ \rho (\bm{\Omega}\bm{\beta}^{(t)} - \bm{\Omega}\bm{\beta}^{(t+1)})^\top (\bm{\Omega}\bm{\beta}^{(t+1)} - \bm{\delta}^{(t)})
\label{lag_def_all}
\end{align}
where$\|\bm{a}\|_{\bm{H}}= \bm{a}^\top\bm{a}$.
Next, as for the fifth term of Eq. (\ref{lag_def_all}),
\begin{align}
&\bm{\gamma}^{(t)} (\bm{\Omega}\bm{\beta}^{(t)} - \bm{\Omega} \bm{\beta}^{(t+1)}) 
+ \bm{\gamma}^{(t)\top} \bm{\delta}^{(t)} - \bm{\gamma}^{(t)\top} \bm{\delta}^{(t)} \nonumber\\
=& \bm{\gamma}^{(t)} (\bm{\Omega}\bm{\beta}^{(t)}- \bm{\delta}^{(t)}) - \bm{\gamma}^{(t)} (\bm{\Omega}\bm{\beta}^{(t+1)}- \bm{\delta}^{(t)})
\label{gamma_term}
\end{align}
Regarding the sixth term of Eq. (\ref{lag_def_all}),
we use the following vector identity in the same manner in \cite{perFL-RSR}:
\begin{align*}
(\bm{a}-\bm{b})^\top(\bm{b}-\bm{c})
=\frac{1}{2}(\|\bm{a} -\bm{c}\|^2_2 - \|\bm{a} -\bm{b}\|^2_2 - \|\bm{b} -\bm{c}\|^2_2).
\end{align*}
Then, the sixth term of Eq. (\ref{lag_def_all}) can be rewritten as
\begin{align}
&\rho (\bm{\Omega}\bm{\beta}^{(t)} - \bm{\Omega} \bm{\beta}^{(t+1)})^\top (\bm{\Omega}\bm{\beta}^{(t+1)}- \bm{\delta}^{(t)}) \nonumber\\
=&\frac{\rho}{2}
\big( 
\| \bm{\Omega}\bm{\beta}^{(t)} - \bm{\delta}^{(t)}\|^2_2
- \| \bm{\Omega}\bm{\beta}^{(t)} - \bm{\Omega} \bm{\beta}^{(t+1)}\|^2_2
- \| \bm{\Omega}\bm{\beta}^{(t+1)} - \bm{\Omega} \bm{\delta}^{(t)}\|^2_2
\big).
\label{delta_term}
\end{align}
With Eq. (\ref{gamma_term}) and Eq. (\ref{delta_term}), Eq. (\ref{lag_def_all}) can be expressed as follows:
\begin{align}
\bm{0} 
\leq
& f(\bm{\beta}^{(t)})
- f(\bm{\beta}^{(t+1)})
+ \frac{\mu}{2} \| \bm{\beta}^{(t)} - \bm{\beta}^{(t+1)} \|^2_2
- \frac{1}{\tau}  \| \bm{\beta}^{(t)} - \bm{\beta}^{(t+1)}\|^2_{\bm{H}} \nonumber\\
&+\bm{\gamma}^{(t)} (\bm{\Omega}\bm{\beta}^{(t)}- \bm{\delta}^{(t)})
- \bm{\gamma}^{(t)} (\bm{\Omega}\bm{\beta}^{(t+1)}- \bm{\delta}^{(t)}) \nonumber\\
&+\frac{\rho}{2}\| \bm{\Omega}\bm{\beta}^{(t)} - \bm{\delta}^{(t)}\|^2_2
- \frac{\rho}{2}\| \bm{\Omega}\bm{\beta}^{(t)} - \bm{\Omega} \bm{\beta}^{(t+1)}\|^2_2
- \frac{\rho}{2}\| \bm{\Omega}\bm{\beta}^{(t+1)} - \bm{\delta}^{(t)}\|^2_2
\label{maj_beta2}
\end{align}
Here, the first, fifth, and seventh term of the right-hand side in Eq. (\ref{maj_beta2}) express the Lagrangian function $\mathcal{L}_\rho (\bm{\beta}^{(t)}, \bm{\delta}^{(t)}, \bm{\gamma}^{(t)}) $, and the second, sixth, and ninth term in Eq. (\ref{maj_beta2}) is the Lagrangian function $\mathcal{L}_\rho (\bm{\beta}^{(t+1)}, \bm{\delta}^{(t)}, \bm{\gamma}^{(t)})$, respectively. 
Therefore, 
\begin{align}
\bm{0} \leq
\mathcal{L}_\rho(\bm{\beta}^{(t)}, \bm{\delta}^{(t)}, \bm{\gamma}^{(t)})
-\mathcal{L}_\rho(\bm{\beta}^{(t+1)}, \bm{\delta}^{(t)}, \bm{\gamma}^{(t)})
+ \frac{\mu}{2} \| \bm{\beta}^{(t)} - \bm{\beta}^{(t+1)} \|^2_2
- \frac{1}{\tau}  \| \bm{\beta}^{(t)} - \bm{\beta}^{(t+1)}\|^2_{\bm{H}} 
- \frac{\rho}{2}\|\bm{\Omega}\bm{\beta}^{(t)} - \bm{\Omega}\bm{\beta}^{(t+1)} \|^2_2 \nonumber\\
\mathcal{L}_\rho(\bm{\beta}^{(t+1)}, \bm{\delta}^{(t)}, \bm{\gamma}^{(t)})
-\mathcal{L}_\rho(\bm{\beta}^{(t)}, \bm{\delta}^{(t)}, \bm{\gamma}^{(t)})  \leq
\frac{\mu}{2} \| \bm{\beta}^{(t+1)} - \bm{\beta}^{(t)} \|^2_2
- \frac{1}{\tau}  \| \bm{\beta}^{(t+1)} - \bm{\beta}^{(t)}\|^2_{\bm{H}} 
- \frac{\rho}{2}\|\bm{\beta}^{(t+1)} - \bm{\beta}^{(t)} \|_{\bm{\Omega}^\top\bm{\Omega}}
\label{maj_beta3}
\end{align}
To show that the expression (\ref{maj_beta3}) is monotonically decreasing, the right-hand side should be simplified to the form $-c \|\bm{\beta}^{(t+1)}-\bm{\beta}^{(t)} \|^2_2$. 
By the Rayleigh quotient theorem for a symmetric matrix $\bm{M}$,
\begin{align}
   \varsigma_{\min}(\bm{M})
   &\leq\frac{\bm{x}^{*\top} \bm{M}\bm{x}^*}{\|\bm{x}^*\|^2_2} \leq \varsigma_{\max}(\bm{M}), \quad \forall \bm{x}^* \neq \bm{0} \nonumber\\
   \varsigma_{\min}(\bm{M})\|\bm{x}^*\|^2_2
   &\leq\bm{x}^{*\top} \bm{M}\bm{x}^* \leq \varsigma_{\max}(\bm{M})\|\bm{x}^*\|^2_2
   \label{raylie}
\end{align}
Applying Eq. (\ref{raylie}) to the second and third terms of the right-hand side in Eq. (\ref{maj_beta3}),respectively. When $\Delta\coloneq \bm{\beta}^{(t+1)} - \bm{\beta}^{(t)}$,
\begin{align*}
\| \Delta \|^2_{\bm{H}} \geq \varsigma_{\min} (\bm{H})\|\Delta\|^2_2, \quad 
\| \Delta \|^2_{\bm{\Omega}^\top \bm{\Omega}} \geq \varsigma_{\min} (\bm{\Omega}^\top \bm{\Omega})\|\Delta\|^2_2.
\end{align*}
Substituting the corresponding terms from Eq.  (\ref{maj_beta3}) gives
\begin{align}
-\frac{1}{\tau}\| \Delta \|^2_{\bm{H}} &\leq -\frac{1}{\tau}\varsigma_{\min} (\bm{H})\|\Delta\|^2_2, \quad {\rm and }\label{rayl_h}\\
- \frac{\rho}{2} \| \Delta \|^2_{\bm{\Omega}^\top \bm{\Omega}} 
&\leq 
- \frac{\rho}{2} \varsigma_{\min} (\bm{\Omega}^\top \bm{\Omega})\|\Delta\|^2_2.
\label{rayl_omega}
\end{align}
Therefore, the right-hand side of Eq. (\ref{maj_beta3}) can be rewritten with Eq. (\ref{rayl_h}) and Eq. (\ref{rayl_omega}) as
\begin{align*}
\mathcal{L}_\rho(\bm{\beta}^{(t+1)}, \bm{\delta}^{(t)}, \bm{\gamma}^{(t)})
-\mathcal{L}_\rho(\bm{\beta}^{(t)}, \bm{\delta}^{(t)}, \bm{\gamma}^{(t)})  
\leq
- \bigg[
 \frac{\varsigma_{\min}(\bm{H})}{\tau}
 + \frac{\rho \varsigma_{\min}(\bm{\Omega}^\top \bm{\Omega})}{2}
 - \frac{\mu}{2}
 \bigg]
 \|\bm{\beta}^{(t+1)} - \bm{\beta}^{(t)} \|^2_2.
\end{align*}
Eq. (\ref{perfl_lemma1}) ensures that the augmented Lagrangian function decreases at each step of updating $\bm{\beta}$, since $\| \bm{\beta}^{(t+1)}- \bm{\beta}^{(t)}\|_2^2$ is strictly positive under the above condition. This can be achieved when  
\begin{align}
\frac{\varsigma_{\min}(\bm{H})}{\tau}
+ \frac{\rho \cdot \varsigma_{\min}(\bm{\Omega}^\top \bm{\Omega})}{2}
- \frac{\mu}{2}
>0.
\label{lemma1_const}
\end{align}
As mentioned, $\bm{H}$, defined in Eq. (\ref{H_set}), is required to be a positive definite so that the constructed function defined in Eq. (\ref{perfl_lemma1}) is a valid upper-bounding function.  
To investigate the conditions under which $\bm{H}$ is positive definite, we consider the eigen decomposition of $\bm{\Omega}^\top \bm{\Omega} = \bm{Q}\bm{\Lambda} \bm{Q}^\top$.
\begin{align*}
\bm{H} &= r \bm{I} - \rho \tau \bm{\Omega}^\top \bm{\Omega}\\
&= r \bm{I} - \rho \tau \bm{Q}\bm{\Lambda}\bm{Q}^\top\\
&= r\bm{Q}\bm{Q}^\top - \rho \tau \bm{Q}\bm{\Lambda}\bm{Q}^\top\\
&= \bm{Q} (r\bm{I} - \rho \tau \bm{\Lambda})\bm{Q}^\top
\end{align*}
Here, $\bm{Q}$ is an orthogonal matrix consisting of the eigenvectors of $\bm{\Omega}$ and $\bm{\Lambda}= \mathrm{diag}(\varsigma_1(\bm{\Omega}^\top \bm{\Omega}), \varsigma_2(\bm{\Omega}^\top \bm{\Omega}),\dots, \varsigma_{Zp}(\bm{\Omega}^\top \bm{\Omega}))$ is a diagonal matrix consisting of the corresponding eigenvalues. 
For this, $r\bm{I} - \rho \tau \bm{\Lambda}$ is a diagonal matrix, and its diagonal entries are given by $r\bm{I} - \rho \tau \bm{\Lambda} = \mathrm{diag}\big(r - \rho \tau \varsigma_1(\bm{\Omega}^\top \bm{\Omega}), \varsigma_2(\bm{\Omega}^\top \bm{\Omega}), \dots, r - \rho \tau \varsigma_{Zp} (\bm{\Omega}^\top \bm{\Omega})\big)$. 
$\bm{\Omega}^\top \bm{\Omega}$ is positive semidefinite, all its eigenvalues are nonnegative. 
Therefore, the eigenvalues of $\bm{H}$ are given by 
\begin{align}
\varsigma_{k^*}(\bm{H}) = r - \rho\tau \varsigma_{k^*}(\bm{\Omega^\top \Omega}).
\label{eigen_H}
\end{align}
The condition that $\bm{H}$ is positive definite is equivalent to the condition that $\varsigma_{k^*}(\bm{H})$ is greater than $0$ for some $k^*$. 
Therefore, $\bm{H}$ becomes a positive definite matrix by setting 
$r > \rho \tau \varsigma_{k^*}(\bm{\Omega^\top \Omega})$. In this case, $r$ represents the minimum value required for $\bm{H}$ to be a positive definite matrix.
This condition ensures the convexity of the surrogate function. From Eq. (\ref{eigen_H}), $r - \rho \tau \varsigma_{k^*}(\bm{\Omega}^\top \bm{\Omega})$ is decreasing in $\varsigma_{k^*}(\bm{\Omega}^\top \bm{\Omega})$. Therefore, $\varsigma_{\min}(\bm{H}) = r - \rho \tau \varsigma_{\max}(\bm{\Omega}^\top \bm{\Omega})$. Substituting $r - \rho \tau\varsigma_{\max}(\bm{\Omega}^\top \bm{\Omega})$ into Eq. (\ref{lemma1_const}):
\begin{align*}
\frac{1}{\tau}
\big(
r - \rho \tau \varsigma_{\max}(\bm{\Omega}^\top \bm{\Omega})
\big)
+ \frac{\rho}{2}
 \varsigma_{\min}(\bm{\Omega}^\top \bm{\Omega})
- \frac{\mu}{2}
>0\\
\frac{r}{\tau}
> \rho\varsigma_{\max}(\bm{\Omega}^\top \bm{\Omega})
- \frac{\rho}{2}\varsigma_{\min}(\bm{\Omega}^\top \bm{\Omega})
+ \frac{\mu}{2}\\
r 
> \rho \tau \varsigma_{\max}(\bm{\Omega}^\top \bm{\Omega})
- \frac{\rho \tau}{2}\varsigma_{\min}(\bm{\Omega}^\top \bm{\Omega})
+ \frac{\tau \mu}{2}
\end{align*}
Here, since $\varsigma_{\min}(\bm{\Omega}^\top \bm{\Omega})$, we have $- \frac{\rho}{2}\varsigma_{\min}(\bm{\Omega}^\top \bm{\Omega})\leq 0$. 
Therefore, a sufficient condition is given by 
$r 
> \rho \tau \varsigma_{\max}(\bm{\Omega}^\top \bm{\Omega}) + \frac{\tau \mu}{2}$. 
Furthermore, to ensure the positive definite of $\bm{H}$,  \cite{perFL-RSR} sets $r$ as 
\begin{align*}
r 
> \rho \tau \varsigma_{\max}(\bm{\Omega}^\top \bm{\Omega})
+ \max \bigg(\frac{\tau\mu }{2}, 1 \bigg).
\end{align*}

\end{proof}

\section{Results of numerical simulations}\label{secC}

\begin{figure}[H]
\begin{center}
\includegraphics[scale=0.48]{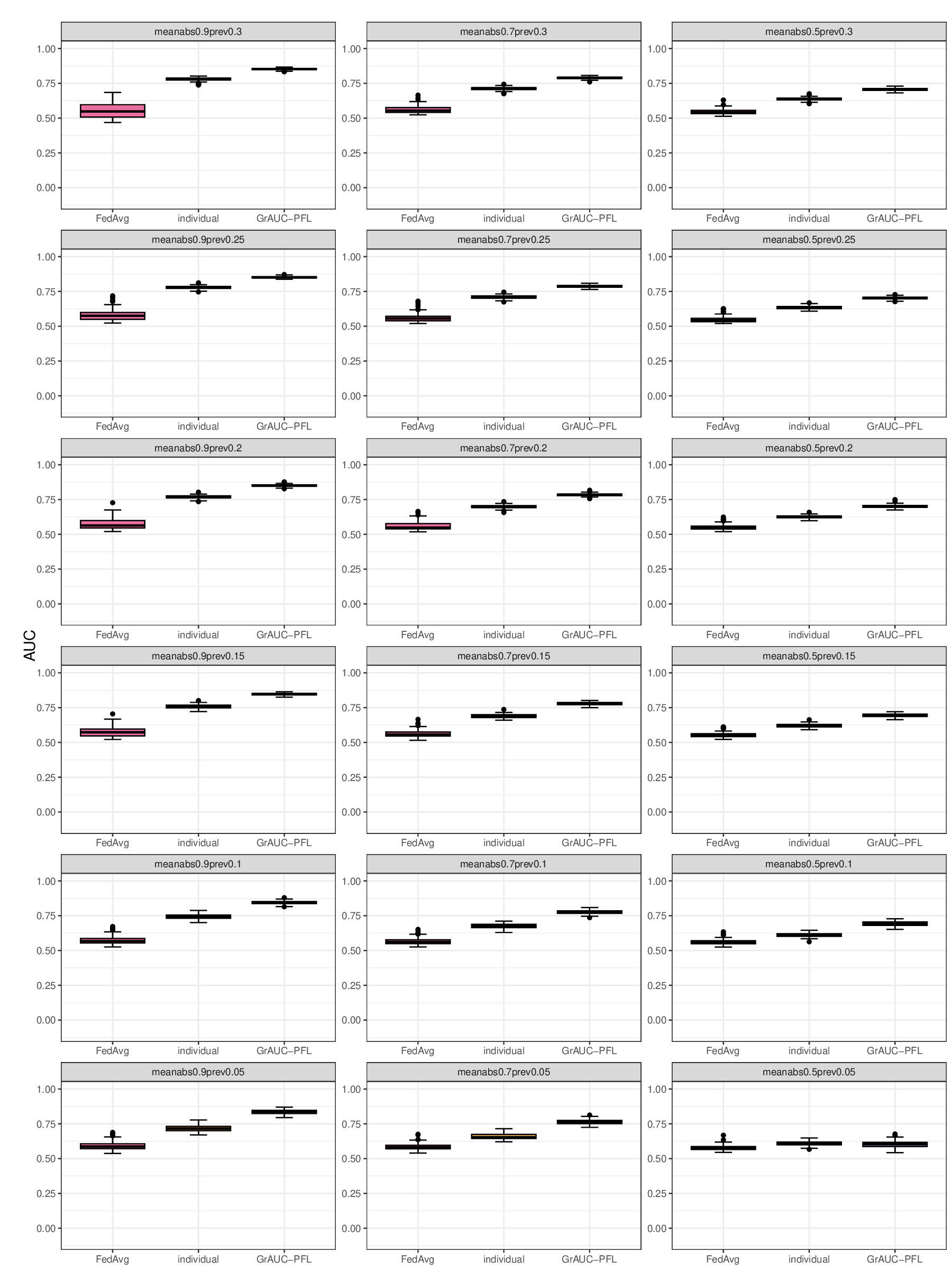}
\caption{The results of simulation 2 in that each institution owns different sample size from $N_{z,train}= \{30, 50, 100, 200, 300 \}$. }
\label{sim2}
\end{center}
\end{figure}
%

\begin{figure}[H]
\begin{center}
\includegraphics[scale=0.47]{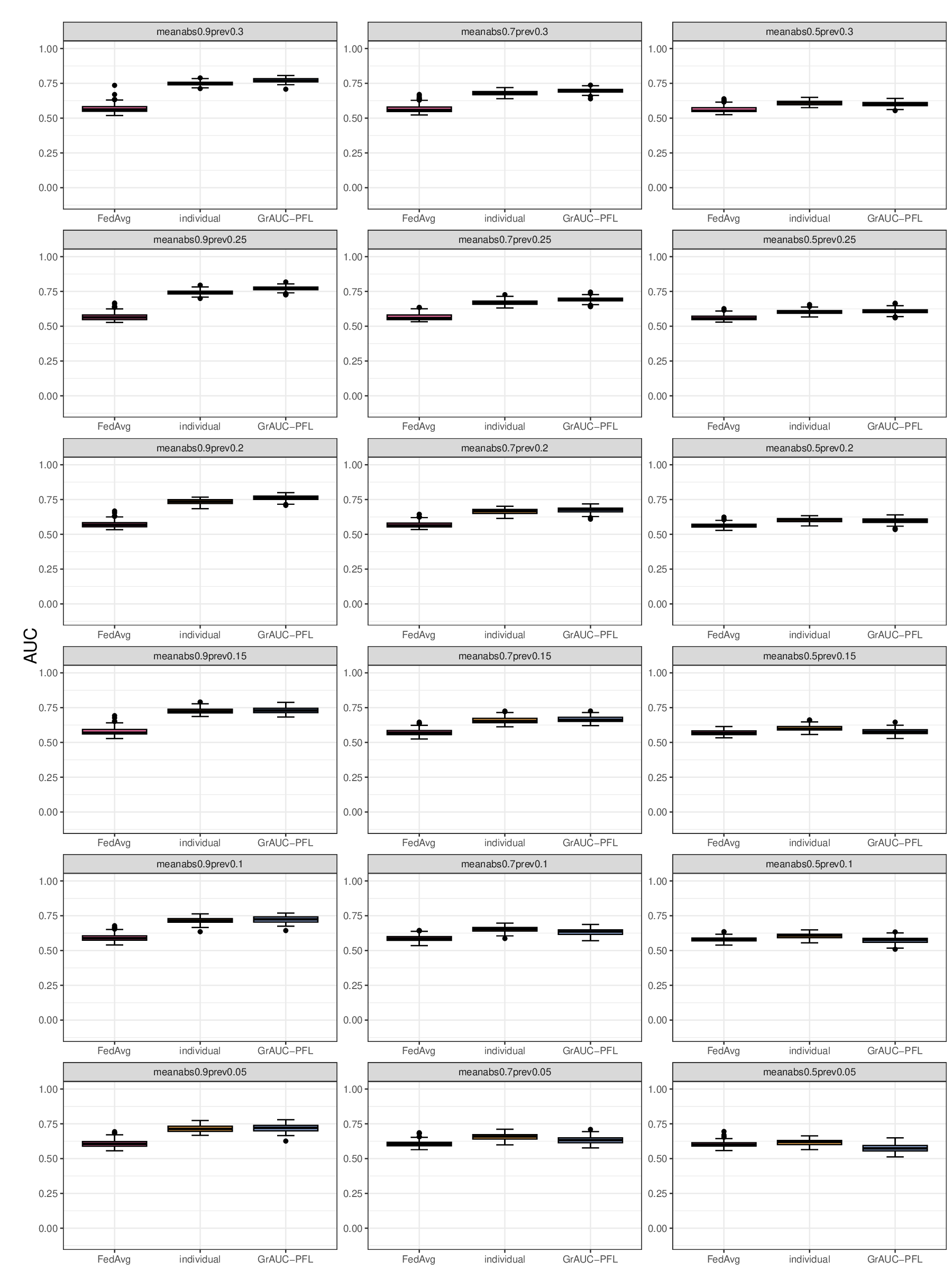}
\caption{The results of simulation 3 in $N_{z,train}=50$. }
\label{sim3_50}
\end{center}
\end{figure}
%

\begin{figure}[H]
\begin{center}
\includegraphics[scale=0.47]{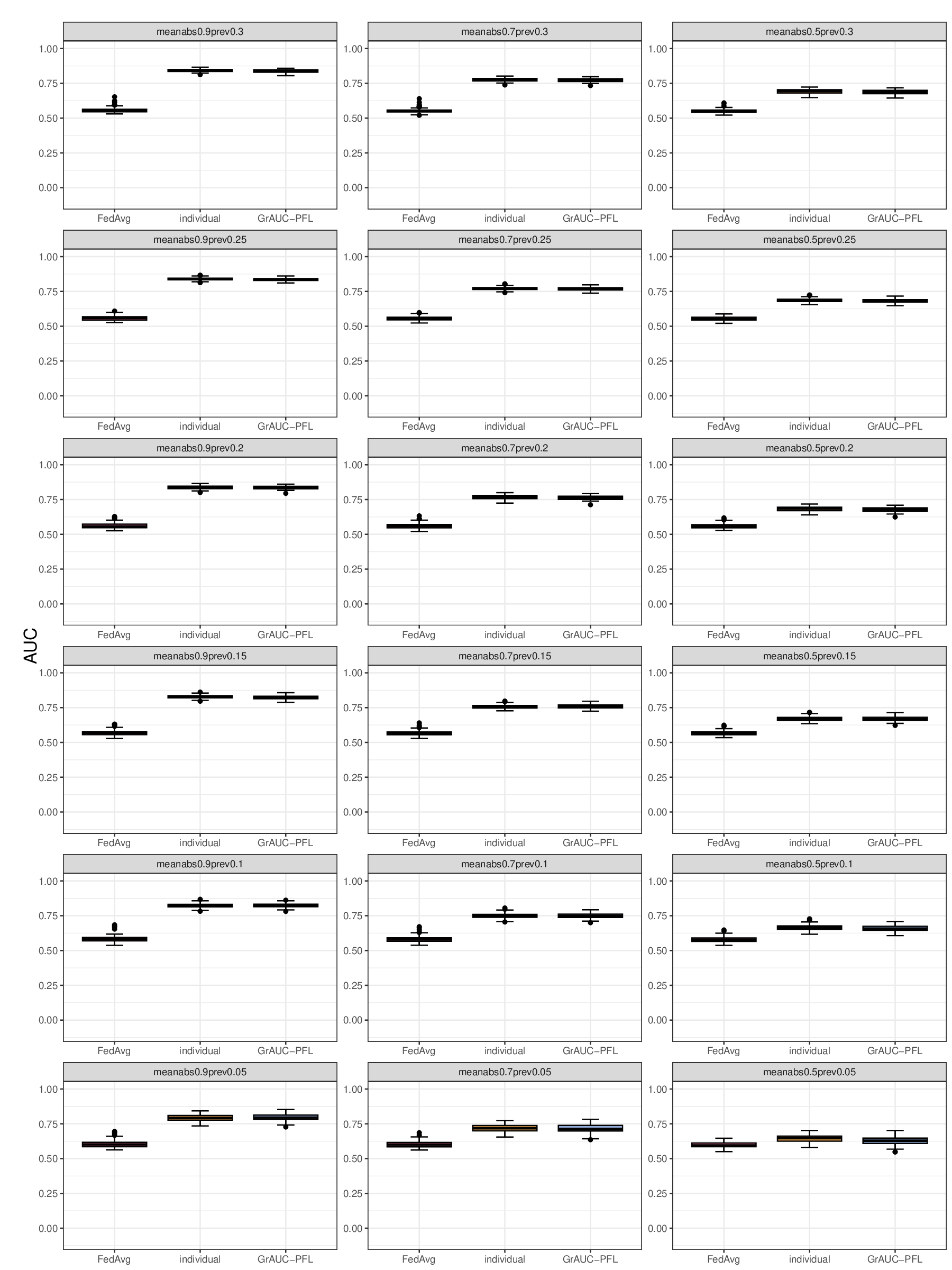}
\caption{The results of simulation 3 in $N_{z,train}=300$. }
\label{sim3_300}
\end{center}
\end{figure}
%

\begin{figure}[H]
\begin{center}
\includegraphics[scale=0.45]{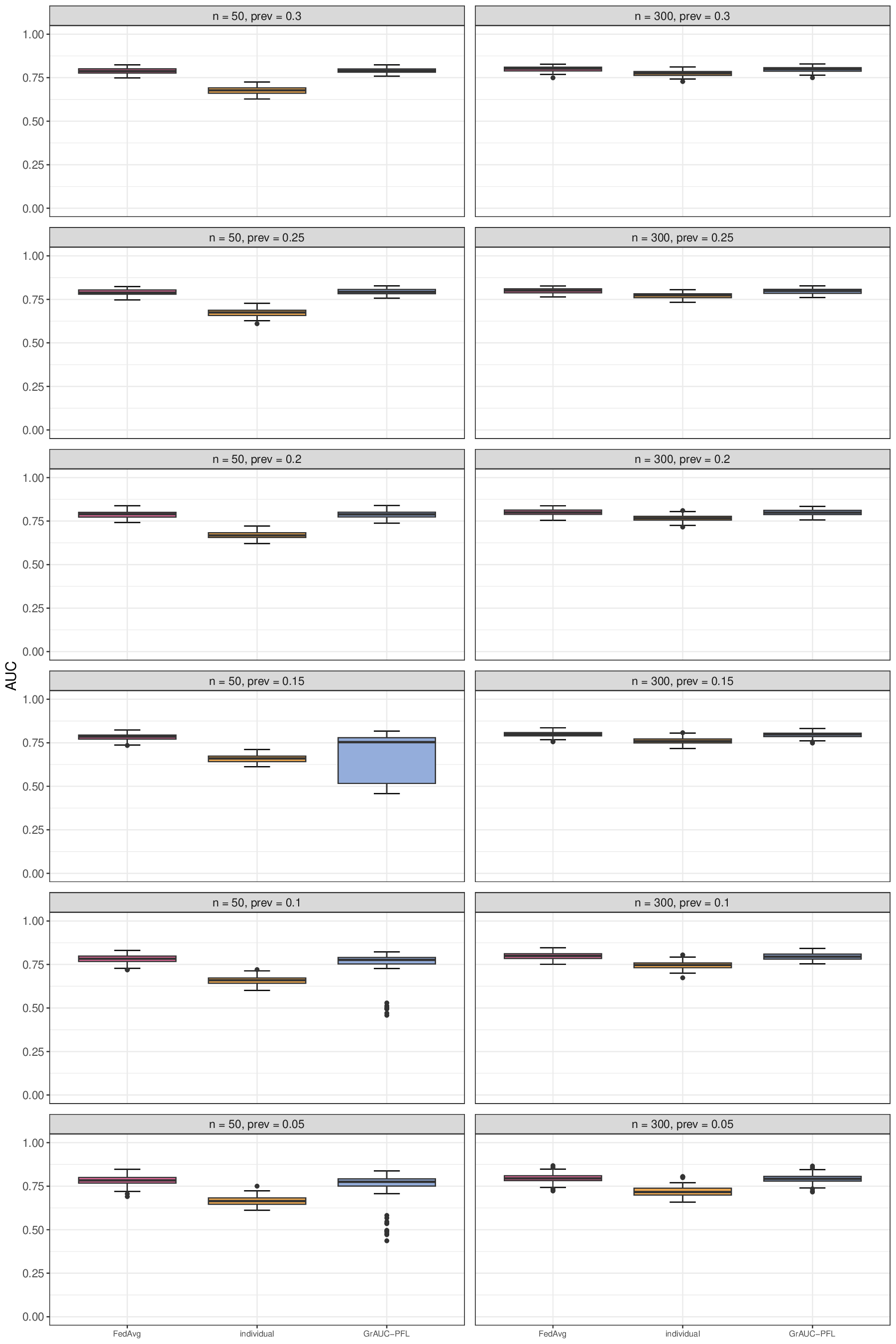}
\caption{The results of simulation 4. The plots in the left column are results at $N_{z,train}=50$ at each institution and those in the right column are at $N_{z,train}=300$. }
\label{sim4}
\end{center}
\end{figure}

\end{document}